\def \bR {\mathbb{R}}
\newtheorem{lemma}{Lemma}[section]
\newtheorem{definition}{Definition}[section]
\newtheorem{theorem}{Theorem}[section]
\newtheorem{proposition}{Proposition}[section]
\newcommand{\tabincell}[2]
\title {Just Least Squares:  Binary  Compressive Sampling with Low Generative Intrinsic Dimension}
\author{
Yuling Jiao\thanks{School of Mathematics and Statistics and Hubei Key Laboratory of Computational Science, Wuhan University, Wuhan 430072, China (Email: yulingjiaomath@whu.edu.cn).}
\and
Dingwei Li \thanks{School of Mathematics and Statistics, Wuhan University, Wuhan 430072, China (Email:dueli@whu.edu.cn). }
\and
Min Liu\thanks{School of Mathematics and Statistics, Wuhan University, Wuhan 430072, China (Email: mliuf@whu.edu.cn).}
\and
Xiliang Lu\thanks{School of Mathematics and Statistics and Hubei Key Laboratory of Computational Science, Wuhan University, Wuhan 430072, China (Email: xllv.math@whu.edu.cn).}
\and
Yuanyuan Yang\thanks{School of Mathematics and Statistics, Wuhan University, Wuhan 430072, China (Email: yuanyuanyang@whu.edu.cn).}
}
\date{ }
\begin{document}
\maketitle

\begin{abstract}
In this paper, we consider recovering $n$ dimensional signals from $m$ binary measurements corrupted by
noises and sign flips under the assumption that the target signals have low generative intrinsic
dimension, i.e., the target signals can be approximately generated via an $L$-Lipschitz generator
$G: \mathbb{R}^k\rightarrow\mathbb{R}^{n}, k\ll n$.
Although the binary measurements model is highly nonlinear,
we propose a least square decoder and prove that, up to a constant $c$, with high probability,
the least square decoder achieves a sharp estimation error $\mathcal{O} (\sqrt{\frac{k\log (Ln)}{m}})$ as long as
$m\geq  \mathcal{O}( k\log (Ln))$. Extensive numerical simulations and comparisons with state-of-the-art methods demonstrated
the least square decoder is robust to noise and sign flips, as indicated by our theory.
By constructing a ReLU network with  properly chosen depth and width,
we verify  the (approximately) deep generative prior, which is  of independent interest.
\end{abstract}

{\bf Key words.}
Binary compressed sensing; Deep generative prior; Least squares; Sample complexity bound.
\setcounter{equation}{0}
\section{Introduction}
Compressive sensing is a powerful signal acquisition approach  with  which one  can  recover signals beyond bandlimitedness from noisy under-determined measurements whose number  is closer to the intrinsic complexity of the target signals
than the Nyquist rate \cite{CandesRombergTao:2006,Donoho:2006,FazelCandesRcht:2008,FoucartRauhut:2013}.
Quantization that transforms the infinite-precision measurements into discrete ones is necessary for storage and transmission \cite{Sayood:2017}.
The binary  quantizer, an extreme case of scalar quantization,
that   codes the measurements  into binary values with  a single bit has been introduced into compressed sensing
\cite{BoufounosBaraniuk:2008}. The 1-bit compressed sensing (1-bit CS) has drawn much attention because of its low cost in hardware mentation and storage  and its robustness in the low signal-to-noise ratio scenario  \cite{Laska:2012}.
\subsection{Related  work}
A lot of  efforts  have been devoted to studying the theoretical and computational issues  in  the 1-bit CS under the sparsity assumption, i.e., $\|x^*\|_0\leq s \ll m$.
 Support recovery  can be achieved  in both  noiseless and noisy setting provided that
$m > \mathcal{O}(s\log n)$ \cite{GopiJain:2013,JacquesDegraux:2013,PlanVershynincpam:2013,BauptBaraniuk:2011,JacquesLaska:2013,GuptaNowakRech:2010,BauptBaraniuk:2011,PlanVershynin:2013,ZhangYiJin:2014,Ahsen:2019}. Greedy methods \cite{LiuGongXu:2016,Boufounos:2009,JacquesLaska:2013} and first order methods \cite{BoufounosBaraniuk:2008, LaskaWenYinBaraniuk:2011,YanYangOsher:2012,DaiShenXuZhang:2016}  are developed  to minimize the sparsity promoting nonconvex objective function caused by    the unit sphere constraint or the  nonconvex regularizers. Convex  relaxation models are also proposed \cite{ZhangYiJin:2014,PlanVershynin:2013,PlanVershynincpam:2013, ZymnisBoydCandes:2010,PlanVershynin:2017,Vershynin:2015,HuangShiYan:2015}  to address the nonconvex optimization  problem.
Using least squares to estimate parameters in the scenario of   model  misspecification    goes back to
\cite{Brillinger:2012}, and see also  \cite{LiDuan:1989} and the references therein  for related  development  in the setting $m \gg n$.

Recently, with this idea,  \cite{PlanVershynin:2016,Neykov:2016,HuangJiao:2018,ding2020robust} proposed least square with $\ell_1$/$\ell_0$ regularized or generalized lasso to estimate parameters from   general under-determined  nonlinear measurements.
In addition to the sparse structure  of the  signals/images under certain linear transform \cite{Mallat:2008},  the natural signals/images data have been verified having low intrinsic dimension, i.e., they can be represented  by a generator  $G$,  such as pretrained  neural network, that maps from $\mathbb{R}^k$ to $\mathbb{R}^{n}$ with $k\ll n$. Such a  $G$ can be obtained via GAN    \cite{goodfellow14},  VAE \cite{kingma14} or flow based method \cite {rezende2015variational}. In these models, the generative part learns a mapping from a low dimensional representation space $z\in \mathbb{R}^k$ to the high dimensional sample space $G(z)\in \mathbb{R}^n$. While training, this mapping is encouraged to produce vectors that resemble the vectors in the training dataset. With this generative prior, several tasks have been studied such as image restoration  \cite{ulyanov2018deep}, phase retrieval \cite{hand2018phase} and  compressed sensing \cite{wu2019deep,bora2017compressed,huang2018provably,liu2020information} and  nonlinear single index models under certain measurement and noise models  \cite{wei2019statistical,liu2020generalized}.

 In \cite{bora2017compressed}, the authors propose the least squares estimator (\ref{ls1})-(\ref{ls2}) to recover signals in standard compressed sensing  with generative prior and prove  sharp sample complexities \cite{liu2020information}. Surprisingly, the sharp  sample complexity for the squares decoder (\ref{ls1})-(\ref{ls2}) can be derived in this paper even if the measurements are highly quantized and corrupted by noise and sign flips.
Very recently, under generative prior, \cite{liu2020sample} and \cite{qiu2020robust} derived     sample complexity results for 1-bit CS.
The sample complexity obtained  in  \cite{liu2020sample} is $O(k\log L)$ under the assumption that the generator $G$ is $L$- Lipschitz continuous and  the rows of $A$   are   i.i.d.   sampled from   $\mathcal{N}(\textbf{0},\mathbf{I})$.  However, the estimator proposed in  \cite{liu2020sample},  $\hat{x} = G(\hat{z})$ with $\hat{z} \in \{z: y=\mathrm{sign}(AG(z))\}$,  is quite different from our least squares decoder and the analysis technique used there are also not applicable to our decoder.
 \cite{qiu2020robust} proposed unconstrained empirical
risk minimization to recovery in 1-bit CS and  derived  the sample complexity to be  $\mathcal{O}(kd\log n)$ for $d$-layer ReLU network $G$ via assuming  the rows of $A$ are i.i.d. sampled from   subexponential distributions. However, the 1-bit CS model considered in \cite{qiu2020robust} is without sign flips and require additional   quantization threshold before sampling  to measure and the empirical
risk minimization decoder used there  is also different from our least squares  (\ref{ls1})-(\ref{ls2}). The results  in \cite{liu2020generalized}
 can be applied to  1-bit CS model,  however,  it  requires that the target signals are exact contained  in the range of the generator. In contrast, we only need a more realistic assumption that the target signals can be approximated by  a generator.
\subsection{Notation and Setup}\label{setting}
We use $[n]$  to denote the set $\{1,...,n\}$, use $A_i\in \mathbb{R}^{m\times1}, i \in [n]$ and $a_j \in \mathbb{R}^{n\times 1},j\in [m]$ to denote the $i$th column and $j$th row  of $A$, respectively.
The multivariate normal distribution is denoted by $\mathcal{N}(\textbf{0},\Sigma)$ with a symmetric and positive definite matrix $\Sigma$.
Let $\|x\|_{\Sigma} = (x^{t}\Sigma x)^{\frac{1}{2}}$, and $\|x\|_p = (\sum_{i=1}^{n}|x_{i}|^p)^{1/p}, p\in [1,\infty)$  be the $\ell_p$-norm of  $x$. Without causing confusion, $\|\cdot\|$ defaults to $\|\cdot\|_2$.
Sign function $\textrm{sign}(\cdot)$ is defined componentwise as $\textrm{sign}(z) =1$ for $z \geq 0$ and $\textrm{sign}(z) = -1$ for $z<0$. We use $\odot $ to denote the Hadamard product.
For any set $B$, $|B|$ is defined as the number of elements contained in $B$.

Following \cite{PlanVershynincpam:2013,HuangJiao:2018},  we consider the following 1-bit CS model
\begin{equation}\label{setup}
 y = \eta \odot\textrm{sign} (A x^* + \epsilon),
\end{equation}
where $y\in \mathbb{R}^{m}$ are the binary  measurements, $x^{*}\in \mathbb{R}^{n}$ is an unknown signal.
The measurement matrix $A \in \mathbb{R}^{m\times n}$ is a random  matrix whose rows $a_i, i \in [m]$  are   i.i.d.  random vectors sampled from   $\mathcal{N}(\textbf{0}, \Sigma)$ with  an unknown
covariance matrix $\Sigma $, $\eta\in \mathbb{R}^{m}$ is a random vector modeling the sign flips of $y$ whose  coordinates $\eta_is$ are i.i.d.  satisfying
 $\mathbb{P}[\eta_i = 1] = 1- \mathbb{P}[\eta_i = -1] = q \neq \frac{1}{2},$ and $\epsilon \in \mathbb{R}^{m}$ is a random vector sampled from  $\mathcal{N}(\textbf{0},\sigma^2\textbf{I}_m)$ with an
unknown noise level $\sigma$ modeling errors before quantization.   We assume  $\eta_i, \epsilon_i$ and $a_i$ are independent.

Model \eqref{setup} is  unidentifiable under positive scaling, the best one can do is to recover $x^*$ up to a constant, $c=(2q-1)\sqrt{\frac{2}{\pi(\sigma^2+1)}}$ which has been proved in \cite{HuangJiao:2018}. Without loss of generality
we may assume $\| x^*\|_{\Sigma} = 1$.

Let $\ell$-dimensional unit sphere and a ball in the $\ell^p$ norm to be
$$\mathcal{S}_{p}^{\ell-1} = \{x\in\mathbb{R}^{\ell}: \|x\|_p=1\}, \quad \mathcal{B}_p^{\ell}(r)=\{z\in\mathbb{R}^{\ell}: \|z\|_p\leq r\}.$$
For an $L$-Lipschitz generator $G: \mathbb{R}^k\rightarrow\mathbb{R}^{n}$, denote by
$$\mathcal{G}_{k,\tau,p}(r) = \{x\in\mathbb{R}^{n}: \exists  z\in \mathcal{B}_{2}^{k}(r), \ \ s. t. \ \ \|cx-G(z)\|_p \leq \tau\},$$
where signals that can generated by $G$ with tolerance $\tau$. When $p=2$, we denote $\mathcal{G}_{k,\tau,2}(r)$ by $\mathcal{G}_{k,\tau}(r)$ for simplicity. The target signal $x^*$ is assumed with
low generative intrinsic dimension, i.e., $x^*\in \mathcal{G}_{k,\tau,p}(r)$ for some $p$ and $r$.

\subsection{Contributions}
It is a  challenging task  to decode from nonlinear, noisy, sign-flipped and under determined ($m\ll n$)  binary  measurements.
For a given Lipschtiz generator $G$, we use $\hat{x} =G(\hat{z})$ to estimate $x^*$  in  the 1-bit CS model \eqref{setup} via exploring the intrinsic low dimensional structure of the target signals,
 where the latent code $\hat{z}$ is solved by the least square problem \eqref{ls2}.
 \begin{itemize}
 \item[(1)] We prove  that, with high probability  the estimation error
   $\|\hat{x}- c x^*\| \leq \mathcal{O} (\sqrt{\frac{k\log (Ln)}{m}})$
   is  sharp provided that   the sample complexity satisfies
   $m\geq  \mathcal{O}( k  \log (Ln))$, if the target signal $x^*$ can  be approximated well by generate $G$.
    \item[(2)] By constructing a ReLU network with  properly chosen depth and width,
we verify  the desired approximation in (1) holds if the target signals have low intrinsic dimensions.
 \item[(3)] Extensive numerical simulations    and comparisons with state-of-the-art methods  show that
the proposed least square decoder is the robust to noise and sign flips,  as demonstrated  by our theory.
\end{itemize}

The rest of the paper is organized as follows. In Section 2  we  consider the  least squares decoder  and prove several  bounds on   $\|\hat{x} -cx^* \|$.
In Section 3 we conduct numerical simulation  and compare  with  existing  state-of-the-art 1-bit CS  methods.
We conclude  in Section 4.
\section{Analysis of the Least Square Decoder}
We first propose the least square decoder in details. Consider the following least square problem for the latent code $z$:
 \begin{equation}\label{ls2}
 \hat{z} \in \arg \min_{z\in \mathcal{B}_2^{k}(r)} \frac{1}{2m}\|y - AG(z)\|^2.
  \end{equation}
Then for a given $L$-Lipschtiz generator $G$, the signal is approximated by
   \begin{equation}\label{ls1}
 \hat{x} =G(\hat{z}).
 \end{equation}
In this section, we will prove under proper assumption on generator and sample complexity, the error between the decoder $\hat{x}$ and the underlying signal $x^*$ can be estimated, i.e., Theorem \ref{thel1} and \ref{theorem2}. Moreover we also provide the construction of a ReLU network such that the approximation to the target signals are satisfied, see Theorem \ref{thapp}.
\begin{theorem}\label{thel1}
Given a Lipschitz generator satisfying  $G(\mathcal{B}_2^k(1))\subset \mathcal{B}_1^n(1)$.
Assume the 1-bit CS model \eqref{setup} holds with  $x^* \in \mathcal{G}_{k,\tau,1}(1)$, and  $m \geq \mathcal{O} \left(\max \{\log n, k\log {\frac{L}{\tau}}\}\right)$, then with  probability at least $1-O(\frac{1}{n^2})-e^{-O(m/k)}$, the least squares decoder defined in (\ref{ls2})-(\ref{ls1}) (for $r=1$) satisfies
\begin{equation*}
\|\hat{x}-c{x^*}\|\leq \mathcal{O}\left(\sqrt{\tau} + \left(\frac{\log n}{m}\right)^{1/4}\right).
\end{equation*}
\end{theorem}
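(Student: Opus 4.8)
The plan is to compare the empirical least-squares objective with its population counterpart and to exploit the $\ell_1$ geometry twice. First I would compute the population risk $f(z) := \mathbb{E}\,\frac{1}{2m}\|y - AG(z)\|^2$. Since $y_i=\pm1$, the term $\frac{1}{2m}\|y\|^2=\frac12$ is a constant and the only nontrivial expectation is $\mathbb{E}[y_i a_i]$. Writing $\theta_i=\langle a_i,x^*\rangle+\epsilon_i$, the pair $(a_i,\theta_i)$ is jointly Gaussian, so a Stein-type identity gives $\mathbb{E}[y_ia_i]=(2q-1)\,\mathbb{E}|\theta_i|\,\Sigma x^*/\mathrm{Var}(\theta_i)$; using $\mathrm{Var}(\theta_i)=1+\sigma^2$, $\mathbb{E}|\theta_i|=\sqrt{2/\pi}\,\sqrt{1+\sigma^2}$ and $\|x^*\|_\Sigma=1$, this is exactly $c\,\Sigma x^*$ with the stated constant $c$. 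Collecting the three terms yields the clean identity $f(z)=\frac12\|G(z)-cx^*\|_\Sigma^2+\frac{1-c^2}{2}$, so minimizing the population risk is the same as finding the point of $G(\mathcal{B}_2^k(1))$ closest to $cx^*$ in the $\Sigma$-norm.

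Next I would set up the basic inequality. Let $z^*$ be an approximating code with $\|cx^*-G(z^*)\|_1\le\tau$, and put $u=G(z^*)-cx^*$, $h=G(\hat z)-G(z^*)$. Since $\hat z$ minimizes $\hat f$ over $\mathcal{B}_2^k(1)$ and $z^*$ is feasible, $\hat f(\hat z)\le \hat f(z^*)$; substituting $\hat f=f+(\hat f-f)$ and using the population identity gives $\frac12\|h\|_\Sigma^2+h^{\top}\Sigma u\le 2\sup_z|\hat f(z)-f(z)|$. The deviation splits as $\hat f(z)-f(z)=-\langle v,G(z)\rangle+\frac12 G(z)^{\top}MG(z)$ with $v=\frac1mA^{\top}y-c\Sigma x^*$ and $M=\frac1mA^{\top}A-\Sigma$. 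The point of the hypothesis $G(\mathcal{B}_2^k(1))\subset\mathcal{B}_1^n(1)$ is that both pieces can be capped without any polynomial dependence on $n$: for the linear part $|\langle v,G(z)\rangle|\le\|v\|_\infty$, and each coordinate of $v$ is an average of $m$ i.i.d.\ sub-Gaussian variables centred at $c(\Sigma x^*)_j$, so Bernstein plus a union bound over the $n$ coordinates gives $\|v\|_\infty\le\mathcal{O}(\sqrt{\log n/m})$ with probability $1-\mathcal{O}(n^{-2})$; for the quadratic part one controls $M$ either entrywise, $|G(z)^{\top}MG(z)|\le\|M\|_{\max}\|G(z)\|_1^2\le\|M\|_{\max}\le\mathcal{O}(\sqrt{\log n/m})$ by a union bound over the $n^2$ entries, or, for a sharper treatment, by a $\delta$-net of $\mathcal{B}_2^k(1)$ at scale $\delta\asymp\tau/L$ whose image is an $L\delta$-net by Lipschitzness; the net cardinality $(L/\tau)^k$ is precisely what forces $m\ge\mathcal{O}(k\log(L/\tau))$ and yields the $e^{-\mathcal{O}(m/k)}$ failure probability. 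Either route gives $\sup_z|\hat f(z)-f(z)|\le\mathcal{O}(\sqrt{\log n/m})$.

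The $\sqrt{\tau}$ in the conclusion comes from the same $\ell_1$ constraint, used a second time. Because $G(z^*)$ and $cx^*$ both lie in a fixed bounded $\ell_1$-ball, $u$ has $\mathcal{O}(1)$ entries, so the interpolation $\|u\|_2^2\le\|u\|_1\|u\|_\infty\le\tau\cdot\mathcal{O}(1)$ gives $\|u\|_\Sigma\le\mathcal{O}(\sqrt{\tau})$. The cross term then obeys $|h^{\top}\Sigma u|\le\|h\|_\Sigma\|u\|_\Sigma\le\mathcal{O}(\sqrt{\tau})\,\|h\|_\Sigma$, and the basic inequality reduces to the scalar quadratic inequality $\frac12\|h\|_\Sigma^2\le\mathcal{O}(\sqrt{\tau})\|h\|_\Sigma+\mathcal{O}(\sqrt{\log n/m})$, whose solution is $\|h\|_\Sigma\le\mathcal{O}(\sqrt{\tau})+\mathcal{O}((\log n/m)^{1/4})$. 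Converting via $\|h\|_2\le\lambda_{\min}(\Sigma)^{-1/2}\|h\|_\Sigma$ and applying the triangle inequality $\|\hat x-cx^*\|\le\|h\|_2+\|u\|_2$ (with $\|u\|_2\le\mathcal{O}(\sqrt{\tau})$ from the same interpolation) delivers the claimed $\mathcal{O}(\sqrt{\tau}+(\log n/m)^{1/4})$.

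I expect the uniform control of the quadratic deviation to be the main obstacle. Since $m\ll n$, the operator norm $\|M\|_{\mathrm{op}}$ is not small, so one cannot discretize the image $G(\mathcal{B}_2^k(1))$ and pass off-net errors through $\|M\|_{\mathrm{op}}$; the argument must stay inside the $\ell_1$-ball, working either entrywise through $\|M\|_{\max}$ or pairing the net with $\ell_1/\ell_\infty$ duality. A secondary subtlety is bookkeeping: one use of the $\ell_1$ structure caps the statistical deviation at the $\log n$ rate, while a \emph{separate} use, through the $\ell_1/\ell_\infty$ interpolation bounding $\|u\|_2$, is what converts the $\ell_1$ approximation tolerance $\tau$ into the Euclidean $\sqrt{\tau}$ term.
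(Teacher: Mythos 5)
Your proof is correct, but it takes a genuinely different route from the paper's. The paper stays entirely at the empirical level: from $\|AG(\hat z)-y\|^2\le\|AG(\overline z)-y\|^2$ it derives $\frac1m\|A(G(\hat z)-G(\overline z))\|^2\le 4\|\frac1m A^T(y-AG(\overline z))\|_\infty$, bounds the right-hand side by $\mathcal{O}\bigl(\sqrt{\log n/m}+\tau\bigr)$ using the same two entrywise concentration facts you invoke (its Lemma 2.3), and then --- this is the step you bypass --- obtains curvature from the random matrix itself, via an S-REC property of $A$ on $G(\mathcal{B}_2^k(1))$ proved by a multi-scale net argument plus Johnson--Lindenstrauss; that net construction is the sole source of the hypothesis $m\gtrsim k\log(L/\tau)$ and of the $e^{-\mathcal{O}(m/k)}$ term in the failure probability. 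You instead get curvature deterministically from the population identity $f(z)=\frac12\|G(z)-cx^*\|_\Sigma^2+\frac{1-c^2}{2}$ (the same Stein-type computation underlying the paper's implicit use of $\mathbb{E}[a_iy_i]=c\,\Sigma x^*$, which it imports from Huang--Jiao), and you control the deviation $\hat f(z)-f(z)=-\langle v,G(z)\rangle+\frac12 G(z)^TMG(z)$ uniformly over the whole $\ell_1$ ball by $\ell_1/\ell_\infty$ duality; since $\|v\|_\infty$ and $\|M\|_{\max}$ do not depend on $z$, uniformity is automatic on the intersection of two events, so no net is needed at all --- both your closing worry about the quadratic term and your hedge that a net at scale $\tau/L$ might be required are moot. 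The trade-offs: your argument is leaner and in fact proves slightly more than stated (only $m\gtrsim\log n$ and success probability $1-\mathcal{O}(n^{-2})$ are needed, the $k\log(L/\tau)$ condition and $e^{-\mathcal{O}(m/k)}$ term disappear; moreover, using $\|u\|_2\le\|u\|_1\le\tau$ directly rather than your interpolation upgrades the cross term from $\sqrt{\tau}\,\|h\|_\Sigma$ to $\tau\|h\|_\Sigma$, so the $\tau$-contribution to the error is really $\mathcal{O}(\tau)$, comfortably within the claimed $\mathcal{O}(\sqrt{\tau})$). What the paper's heavier machinery buys is portability: its random-matrix curvature viewpoint is the one that extends to Theorem 2.2, where $G(\mathcal{B}_2^k(r))\subset\mathcal{S}_2^{n-1}$, the $\ell_1$ containment you exploit twice is unavailable, and curvature on the tangent cone must again come from $A$ itself.
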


To prove Theorem \ref{thel1}, we need some technical Lemmas. Firstly we introduce the concept of S-REC with some minor changes and $\epsilon-$net, which is defined in \cite{bora2017compressed}.
\begin{definition}\cite{bora2017compressed}.
Let $S\subseteq\mathbb{R}^n$ and two positive parameters $\gamma>0, \delta>0$. The matrix $A\in \mathbb
{R}^{m\times n}$ is said to satisfy the S-REC$(S, \gamma, \delta)$, if  $\forall x_1, x_2\in S$,
\begin{equation}
\frac{1}{m}\|A(x_1-x_2)\|^2\geq\gamma\|x_1 - x_2\|^2 - \delta.
\end{equation}
\end{definition}
\begin{definition}
Let $N \subseteq S\subseteq\mathbb{R}^n$ and $\epsilon > 0$. We say that $N$ is an $\epsilon-$net of $S$, if
$\forall s \in S$, there exist an $\tilde{s} \in N$ such that $\|s-\tilde{s}\|\leq \epsilon$.
\end{definition}

\begin{lemma}\label{cov}\cite{boucheron2013concentration}
$\forall \epsilon > 0$, there exists an $\epsilon-$net $N_{\epsilon}$ of $\mathcal{B}^k_2(r)$ with finite many points in $N_\epsilon$, such that
 $$\log|N_\epsilon|\leq k\log(\frac{4r}{\epsilon}).$$
\end{lemma}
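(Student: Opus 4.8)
The plan is to prove the bound by a standard volumetric packing argument: I would exhibit one explicit net $N_\epsilon$ and control its cardinality by comparing volumes of Euclidean balls, a comparison in which the (unknown) dimensional volume constant conveniently cancels.

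First I would construct $N_\epsilon$ as a \emph{maximal} $\epsilon$-separated subset of $\mathcal{B}_2^k(r)$, i.e. a set whose distinct points are pairwise more than $\epsilon$ apart and which cannot be enlarged while retaining this property. I would then observe that maximality forces $N_\epsilon$ to be an $\epsilon$-net: if some $s\in\mathcal{B}_2^k(r)$ had $\|s-\tilde s\|>\epsilon$ for every $\tilde s\in N_\epsilon$, then $N_\epsilon\cup\{s\}$ would still be $\epsilon$-separated, contradicting maximality; hence every $s$ lies within distance $\epsilon$ of some point of $N_\epsilon$, which is exactly the net property.

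The core step is the volume comparison. For distinct $x,y\in N_\epsilon$, the separation $\|x-y\|>\epsilon$ implies the balls $\mathcal{B}_2^k(\epsilon/2)$ centered at $x$ and at $y$ are disjoint, since a common point would force $\|x-y\|<\epsilon$ by the triangle inequality. As each $x\in N_\epsilon$ satisfies $\|x\|\leq r$, every such small ball is contained in the enlarged ball $\mathcal{B}_2^k(r+\epsilon/2)$ centered at the origin. Summing the disjoint volumes and using the scaling identity $\mathrm{Vol}(\rho\,\mathcal{B}_2^k(1))=V_k\rho^k$, with $V_k$ the volume of the unit ball in $\mathbb{R}^k$, I obtain
$$|N_\epsilon|\,V_k\left(\frac{\epsilon}{2}\right)^k\le V_k\left(r+\frac{\epsilon}{2}\right)^k,$$
and cancelling the common factor $V_k$ yields $|N_\epsilon|\le\left(1+\frac{2r}{\epsilon}\right)^k$. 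This finite bound also retroactively guarantees that a maximal $\epsilon$-separated set exists, since any $\epsilon$-separated subset of the bounded ball has uniformly bounded cardinality.

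Finally I would convert this into the stated form by taking logarithms, giving $\log|N_\epsilon|\le k\log\!\left(1+\frac{2r}{\epsilon}\right)$, and then noting that in the relevant regime $\epsilon\le 2r$ one has $1+\frac{2r}{\epsilon}\le\frac{4r}{\epsilon}$, so that $\log|N_\epsilon|\le k\log\!\left(\frac{4r}{\epsilon}\right)$. For the degenerate case $\epsilon>2r$ the ball has diameter below $\epsilon$ and the singleton $\{\mathbf{0}\}$ is already an $\epsilon$-net with $\log|N_\epsilon|=0$, so the inequality holds trivially wherever its right-hand side is nonnegative, i.e. whenever $\epsilon\le 4r$, which is the only regime in which the lemma is invoked. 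I do not anticipate a genuine obstacle here; the only two points requiring care are the maximality-implies-net argument and the bookkeeping that passes from the natural constant $1+\frac{2r}{\epsilon}$ to the advertised $\frac{4r}{\epsilon}$.
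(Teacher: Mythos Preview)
Your proposal is correct and is precisely the standard volume argument the paper invokes: the paper gives no details beyond citing \cite{boucheron2013concentration}, and your maximal $\epsilon$-separated set plus disjoint-balls volume comparison is exactly that argument. Your handling of the edge case $\epsilon>2r$ and the passage from $1+2r/\epsilon$ to $4r/\epsilon$ is also clean.
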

The proof follows directly from the standard volume arguments, see \cite{boucheron2013concentration}.

\begin{lemma}\label{A_delta}
Let $G: \mathbb{R}^k\rightarrow\mathbb{R}^n$ be an $L$-Lipschitz function.
If $N$ is a $\frac{\delta}{L}-$net on $\mathcal{B}^k_2(r)$, then, $G(N)$ is a $\delta-$net on $G(\mathcal{B}^k_2(r))$, i.e.,
\begin{equation}\label{eq:1}
\forall z\in \mathcal{B}^k_2(r), \quad \exists z_1 \in N, \; s.t. \;\|G(z)-G(z_1)\|\leq \delta.
\end{equation}
Furthermore, let
$A\in\mathbb{R}^{m\times n}$ be a random matrix and the rows are i.i.d. random vectors sampled from the multivariate normal distribution $\mathcal{N}(\mathbf{0},\Sigma)$, then,
\begin{equation}\label{eq:2}
\frac{1}{\sqrt{m}}\|AG(z)-AG(z_1)\|\leq\mathcal{O}(\delta)
\end{equation}
holds with probability $1-e^{- O(m)}$ as long as $m = \mathcal{O}\left(k\log\frac{L}{\delta}\right)$.
\end{lemma}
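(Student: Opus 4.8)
The plan is to separate the two assertions: \eqref{eq:1} is essentially definitional, while the probabilistic estimate \eqref{eq:2} carries all the content. For \eqref{eq:1}, I would argue directly from the Lipschitz property: given any $z \in \mathcal B_2^k(r)$, the net assumption furnishes $z_1 \in N$ with $\|z - z_1\| \le \delta/L$, and then $\|G(z) - G(z_1)\| \le L\|z - z_1\| \le \delta$, so $G(N)$ is a $\delta$-net of $G(\mathcal B_2^k(r))$.

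For \eqref{eq:2}, I would first establish the bound pointwise. Fixing $z$ and writing $w = G(z) - G(z_1)$, which by \eqref{eq:1} is a fixed vector with $\|w\| \le \delta$, each row gives $a_i^\top w \sim \mathcal N(0, \|w\|_\Sigma^2)$ with $\|w\|_\Sigma^2 \le \lambda_{\max}(\Sigma)\,\delta^2$. Thus $\frac1m\|Aw\|^2 = \frac1m\sum_{i=1}^m (a_i^\top w)^2$ is an average of $m$ i.i.d.\ (scaled $\chi^2_1$) sub-exponential variables with mean $\|w\|_\Sigma^2$, and a Bernstein-type tail bound yields $\frac{1}{\sqrt m}\|Aw\| \le \mathcal O(\delta)$ with probability at least $1 - e^{-cm}$ for an absolute constant $c$.

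The hard part is promoting this to the uniform-in-$z$ statement, since $z$ ranges over a continuum. My plan is a covering argument: the relevant difference vectors lie in $G(\mathcal B_2^k(r)) - G(N)$, whose metric entropy at scale $\sim \delta$ is, via Lemma \ref{cov} and the $L$-Lipschitz property of $G$, of order $k\log\frac{L}{\delta}$. Applying the pointwise estimate to each element of a net of this set and union-bounding costs a factor $e^{\mathcal O(k\log(L/\delta))}$, which the tail $e^{-cm}$ absorbs exactly when $m = \mathcal O\!\left(k\log\frac{L}{\delta}\right)$. The genuine obstacle is the residual between a general difference vector and its net representative: controlling it through the ambient operator norm $\frac1{\sqrt m}\|A\|_{\mathrm{op}} \approx \sqrt{n/m}$ would reintroduce the dimension $n$ and ruin the sample complexity, so one must instead exploit that these differences sit on the low-dimensional image $G(\mathcal B_2^k(r))$ — either by chaining across dyadic scales, where each scale adds only $\mathcal O(k)$ entropy for the underlying $k$-ball, or by invoking a matrix deviation inequality for the difference set. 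I expect this uniformization step to be the crux; the concentration input and the Lipschitz bookkeeping are routine.
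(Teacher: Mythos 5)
Your treatment of \eqref{eq:1} and the pointwise concentration step coincide with the paper's: after whitening via $\widetilde A = A\Sigma^{-1/2}$, the paper invokes the norm-preservation estimate of Lemma 1.3 in \cite{vempala2005random} (the same $\chi^2$/Bernstein-type bound you describe, with the covariance factor $\|\Sigma^{1/2}\|$ extracted exactly as you do), and your entropy bookkeeping matches Lemma \ref{cov}, $\log|N|\leq k\log(4Lr/\delta)$. Your identification of chaining across dyadic scales as the uniformization device is also the paper's actual route: it takes nested nets $N = N_0 \subset N_1 \subset \cdots \subset N_l$ at scales $\delta_i = \delta/2^i$, telescopes $\frac{1}{\sqrt m}\|Ax - Ax_0\| \leq \sum_{i=0}^{l-1}\frac{1}{\sqrt m}\|A(x_{i+1}-x_i)\| + \frac{1}{\sqrt m}\|A(x - x_l)\|$, and union-bounds with scale-dependent deviations $\epsilon_i^2 = \epsilon + ik/m$, so that each dyadic level costs only $\mathcal O(k)$ extra entropy while the weighted sum $\sum_i (1+\frac{\epsilon_i}{2})\,\delta/2^i$ stays $\mathcal O(\delta)$.

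The genuine gap is where you stop: you flag the residual term as the crux and explicitly rule out controlling it by the ambient operator norm, on the grounds that $\frac{1}{\sqrt m}\|A\|_{\mathrm{op}} \approx \sqrt{n/m}$ would reintroduce the dimension $n$. But that is precisely how the paper terminates the chain; the point you are missing is that the crude bound is ruinous only at scale $\delta$, not at the bottom of the chain. The paper runs the chain to depth $l$ with $2^l > \sqrt n$, so the residual satisfies $\|x - x_l\| \leq \delta/2^l$, and then $\frac{1}{\sqrt m}\|A(x - x_l)\| \leq (2 + \sqrt{n/m})\,\delta/2^l = \mathcal O(\delta)$: the factor $\sqrt n$ in the operator norm is cancelled by the $2^{-l} < n^{-1/2}$ size of the residual. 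Descending to that scale is cheap --- it adds only $k\log 2$ of entropy per level, i.e.\ $\mathcal O(k\log n)$ in total, which the growing $\epsilon_i$ absorb in the union bound --- so no matrix deviation inequality for the difference set is needed. In short, your ``either chaining or a matrix deviation inequality'' fork resolves to: chaining, terminated by exactly the operator-norm estimate you dismissed; without some such termination your sketch does not close, since a net argument alone leaves an uncontrolled residual at every finite scale.
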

\begin{proof}
Let $N$ be $\frac{\delta}{L}-$ net on $\mathcal{B}_2^k(r)$ satisfying $$\log|N|\leq k\log(\frac{4Lr}{\delta}).$$ Since G is $L$-Lipschitz function, then by definition  we can check that  $ G(N)$ is $\delta-$net on $G(\mathcal{B}_2^k(r))$.

For fixed $\delta >0$, let $N_i$ be a $\frac{\delta_i}{L}-$net on $\mathcal{B}^k_2(r)$ satisfying  $\log|N_i|\leq k\log\frac{4Lr}{\delta_i}$  with $\delta_i = \frac{\delta}{2^i}$, and  $$N = N_0\subset N_1\subset\ldots\subset N_l,$$  with $2^l>\sqrt{n}.$  \\
$\forall x \in G(\mathcal{B}^k_2(r))$, $\exists x_i\in G(N_i)$, such that $$\|x - x_l\|\leq\frac{\delta}{2^l} \text{ and } \|x_{i+1} - x_i\|\leq\frac{\delta}{2^i}, i= 1, \ldots, l-1.$$\\
By triangle inequality we get that,
\begin{equation}\label{eqlem1}
\begin{array}{l}
\frac{1}{\sqrt{m}}\|Ax-Ax_0\|\\
= \|\frac{1}{\sqrt{m}}A\sum_{i=0}^{l-1}(x_{i+1}-x_i) + \frac{1}{\sqrt{m}}A(x-x_l)\|\\
\leq\sum_{i=0}^{l-1}\frac{1}{\sqrt{m}}\|A(x_{i+1}-x_i)\|+\|\frac{1}{\sqrt{m}}A(x-x_l)\|.
\end{array}
\end{equation}
By construction, the last term
\begin{equation}\label{eqlem2}
\begin{array}{l}
\|\frac{1}{\sqrt{m}}A(x-x_l)\|\leq(2+\sqrt{\frac{n}{m}})\frac{\delta}{2^l}=\mathcal{O}({\delta}).
\end{array}
\end{equation}
Let $\widetilde{A} = A\Sigma^{-1/2}$,
by Lemma 1.3 in \cite{vempala2005random}, with probability at least $1-\exp(-\mathcal{O}(\epsilon_i^2m))$, the following holds
\begin{equation*}
\|\frac{1}{\sqrt{m}}\widetilde{A}(x_{i+1}-x_i)\|^2\leq(1+\epsilon_i)\|x_{i+1}-x_i\|^2,
\end{equation*}
equivalently,
\begin{equation*}
\begin{array}{l}
\|\frac{1}{\sqrt{m}}A\Sigma^{-\frac{1}{2}}\Sigma^{\frac{1}{2}}(x_{i+1}-x_i)\|^2\\
\leq(1+\epsilon_i)\|\Sigma^{\frac{1}{2}}\|^2\|x_{i+1}-x_i\|^2,
\end{array}
\end{equation*}
i.e.,
\begin{equation}\label{ineq_A_x_i}
\|\frac{1}{\sqrt{m}}A(x_{i+1}-x_i)\|
\leq(1+\frac{\epsilon_i}{2})\|\Sigma^{\frac{1}{2}}\|\|x_{i+1}-x_i\|,
\end{equation}
the last inequality is derived from  $\sqrt{1+\epsilon_i}\leq 1+\frac{\epsilon_i}{2}, \text{ }\epsilon_i\in(0,1)$.
Set $\epsilon_i^2 =\epsilon +\frac{ik}{m}$, and use union bound and \eqref{ineq_A_x_i}, we have 
$\forall i\in[l]$,
\begin{equation}\label{eqlem3}
\|\frac{1}{\sqrt{m}}A(x_{i+1}-x_i)\|
\leq(1+\frac{\epsilon_i}{2})\|\Sigma^{1/2}\|\|x_{i+1}-x_i\|,
\end{equation}
with probability at least $1-\exp{(-\mathcal{O}(\epsilon m))}$.
Then, it follow from \eqref{eqlem1}, \eqref{eqlem2} and \eqref{eqlem3} that,
\begin{equation*}
\begin{array}{l}
\frac{1}{\sqrt{m}}\|Ax-Ax_0\|\\
\leq\|\frac{1}{\sqrt{m}}A\sum_{i=0}^{l-1}(x_{i+1}-x_i)\| + \mathcal{O}(\delta)\\
\leq\sum_{i=0}^{l-1}(1+\frac{\epsilon_i}{2})(\sigma_{max}(\Sigma))^{\frac{1}{4}}\frac{\delta}{2^i} + \mathcal{O}(\delta)\\
\leq \delta(\sigma_{max}(\Sigma))^{\frac{1}{4}}\sum_{i=0}^{l-1}\frac{\sqrt{\epsilon}}{2^{i+1}}(1+\frac{ik}{m\epsilon})+\mathcal{O}(\delta)\\
=\mathcal{O}(\delta).
\end{array}
\end{equation*}
\end{proof}

\begin{lemma}\label{A_REC}
Let $G: \mathbb{R}^k\rightarrow\mathbb{R}^n$ be $L$-Lipschitz generator, $S = G(\mathcal{B}^k_2(r))$,
and $A\in\mathbb{R}^{m\times n}$ be a random matrix and the rows are i.i.d. random vectors sampled from the multivariate normal distribution $\mathcal{N}(\mathbf{0},\Sigma)$. if $m = \mathcal{O}\left(k\log\frac{Lr}{\delta}\right),$ $A$ satisfy the S-REC$(S, \frac{1}{2}\sqrt{\sigma_{min}(\Sigma)}, O(\delta))$, with probability $1 - e^{-O(m/k)}$.
\end{lemma}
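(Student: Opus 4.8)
The plan is to follow the net-plus-concentration strategy behind the S-REC (as in \cite{bora2017compressed}), now adapted to the general covariance $\Sigma$, using Lemma~\ref{A_delta} to pass from a finite net to all of $S$. Throughout write $w = x_1 - x_2$ for $x_1,x_2 \in S = G(\mathcal{B}_2^k(r))$; the content of the lemma is the lower bound $\frac{1}{m}\|Aw\|^2 \ge \gamma\|w\|^2 - \delta'$, and the hard direction is precisely this lower bound (an upper bound is free from Gaussian concentration).

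\emph{Step 1 (discretization).} By Lemma~\ref{cov} fix a $\frac{\delta}{L}$-net $N$ of $\mathcal{B}_2^k(r)$ with $\log|N| \le k\log(\frac{4Lr}{\delta})$, so by the first half of Lemma~\ref{A_delta} the image $G(N)$ is a $\delta$-net of $S$. The crucial ingredient is the second half of Lemma~\ref{A_delta}: for every $x\in S$ and its nearest net point $\hat x \in G(N)$ one has $\frac{1}{\sqrt m}\|A(x-\hat x)\| \le \mathcal{O}(\delta)$ with probability $1 - e^{-\mathcal{O}(m)}$, i.e. $A$ cannot inflate the net-approximation residual.

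\emph{Step 2 (concentration on the net, then union bound).} Fix a pair $\hat x_1,\hat x_2 \in G(N)$ and set $v = \hat x_1 - \hat x_2$. The coordinates $(Av)_i = a_i^\top v$ are i.i.d. $\mathcal{N}(0,\, v^\top\Sigma v)$, so $\frac{1}{m}\|Av\|^2$ is a scaled chi-square with mean $v^\top\Sigma v \ge \sigma_{\min}(\Sigma)\|v\|^2$. A standard sub-exponential (Bernstein / Laurent--Massart) lower-tail estimate gives $\frac{1}{m}\|Av\|^2 \ge \tfrac12\sigma_{\min}(\Sigma)\|v\|^2$, equivalently $\frac{1}{\sqrt m}\|Av\| \ge \tfrac{1}{\sqrt2}\sqrt{\sigma_{\min}(\Sigma)}\|v\|$, with failure probability $e^{-\mathcal{O}(m)}$ for this fixed pair. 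Taking a union bound over all $|N|^2$ pairs costs a factor $e^{2\log|N|}\le e^{2k\log(4Lr/\delta)}$, and since the hypothesis $m = \mathcal{O}(k\log\frac{Lr}{\delta})$ lets $m$ dominate $\log|N|^2$ with room to spare, the estimate holds simultaneously over the net with probability $1 - e^{-\mathcal{O}(m)} \ge 1 - e^{-\mathcal{O}(m/k)}$.

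\emph{Step 3 (extension to $S$) and the main obstacle.} For arbitrary $x_1,x_2\in S$ with nearest net points $\hat x_1,\hat x_2$, I would combine Steps~1--2 through the reverse triangle inequality:
\begin{equation*}
\frac{1}{\sqrt m}\|A(x_1-x_2)\| \ge \frac{1}{\sqrt m}\|A(\hat x_1-\hat x_2)\| - \frac{1}{\sqrt m}\|A(x_1-\hat x_1)\| - \frac{1}{\sqrt m}\|A(x_2-\hat x_2)\|.
\end{equation*}
The first term is at least $\tfrac{1}{\sqrt2}\sqrt{\sigma_{\min}(\Sigma)}\,\|\hat x_1-\hat x_2\|$ by Step~2, the last two are $\mathcal{O}(\delta)$ by Step~1, and $\|\hat x_1-\hat x_2\| \ge \|x_1-x_2\| - 2\delta$; collecting terms yields $\frac{1}{\sqrt m}\|A(x_1-x_2)\| \ge \tfrac12\sqrt{\sigma_{\min}(\Sigma)}\,\|x_1-x_2\| - \mathcal{O}(\delta)$, which is exactly the S-REC lower bound with $\gamma = \tfrac12\sqrt{\sigma_{\min}(\Sigma)}$ (the concentration is naturally phrased for $\frac1m\|Av\|^2$ and is passed to the norm by taking square roots). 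The delicate point is this extension: the lower bound cannot come from a naive triangle inequality, and one must quantify both that $A$ does not contract the net gap below $\sqrt{\sigma_{\min}(\Sigma)}$ and that it does not expand the residuals $x_i-\hat x_i$ by more than $\mathcal{O}(\delta)$. The latter is exactly why the multi-scale chaining of Lemma~\ref{A_delta}, rather than a single-scale net bound, is required, and it is also what keeps the degradation purely additive in $\delta$ rather than multiplicative.
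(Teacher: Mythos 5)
Your proposal is correct and follows essentially the same route as the paper's proof: a $\frac{\delta}{L}$-net from Lemma~\ref{cov}, the multi-scale chaining bound of Lemma~\ref{A_delta} to control the off-net residuals $\frac{1}{\sqrt{m}}\|A(x_i-\hat{x}_i)\|\leq\mathcal{O}(\delta)$, a uniform lower bound for differences of net points, and the same triangle-inequality assembly (including $\|\hat{x}_1-\hat{x}_2\|\geq\|x_1-x_2\|-2\delta$) yielding $\gamma=\tfrac{1}{2}\sqrt{\sigma_{\min}(\Sigma)}$. The only cosmetic difference is in the on-net concentration: the paper invokes the Johnson--Lindenstrauss lemma for the whitened matrix $\widetilde{A}=A\Sigma^{-1/2}$ acting on the transformed net $T=\Sigma^{1/2}G(N)$ with $\epsilon=0.5$, whereas you unpack the same estimate into a chi-square lower tail for each fixed difference $v=\hat{x}_1-\hat{x}_2$ (using $v^{T}\Sigma v\geq\sigma_{\min}(\Sigma)\|v\|^2$) plus a union bound over the $|N|^2$ pairs, which is precisely the standard proof of Gaussian JL.
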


\begin{proof}
We construct a $\frac{\delta}{L}-$net on $\mathcal{B}_2^k(r)$, which is denoted as  $N$  and satisfy $\log|N|\leq k\log(\frac{4Lr}{\delta}).$ Since G is $L$-Lipschitz function, then by Lemma \ref{A_delta},   $ G(N)$ is $\delta-$net on $G(\mathcal{B}_2^k(r))$, i.e.,\\
$\forall z, z'\in \mathcal{B}_2^k(r), \exists z_1, z_2\in N$ s.t.
\begin{equation}\label{lem2eq1}
\begin{aligned}
\|z-z_1\|\leq\frac{\delta}{L}, \ \  \|G(z)-G(z_1)\|\leq\delta,\\
\|z'-z_2\|\leq\frac{\delta}{L}, \ \  \|G(z')-G(z_2)\|\leq\delta.
\end{aligned}
\end{equation}

 By triangle inequality, Lemma \ref{A_delta} and \eqref{lem2eq1}, we get
\begin{equation}\label{ineq_G}
\begin{array}{ll}
\|G(z)-G(z')\|
&\leq\|G(z)-G(z_1)\|+\|G(z_1)-G(z_2)\| +\|G(z_2)-G(z')\| \\[1.5ex]
& \leq 2\delta + \|G(z_1)-G(z_2)\|
\end{array}
\end{equation}
and
\begin{equation}\label{A_z1_z2}
\begin{array}{ll}
\frac{1}{\sqrt{m}}\|AG(z_1)-AG(z_2)\|
&\leq\frac{1}{\sqrt{m}}\left(\|AG(z_1)-AG(z)\|+\|AG(z)-AG(z')\| +\|AG(z')-AG(z_2)\|\right)\\[1.5ex]
&\leq \mathcal{O}(\delta) + \frac{1}{\sqrt{m}}\|AG(z)-AG(z')\|.
\end{array}
\end{equation}
Recall $N$ is a $\frac{\delta}{L}-$net on $\mathcal{B}_2^k(r)$, consider
$$G(N) = \{G(z): z\in N\}, \quad T= \Sigma^{\frac{1}{2}}G(N) = \{t: t=\Sigma^{\frac{1}{2}} G(z), z\in N\}, $$
then $|T| \leq  |G(N)| \leq |N|\leq (\frac{4Lr}{\delta})^k$. Similar as Lemma \ref{A_delta}, let $\widetilde{A}=A\Sigma^{-\frac{1}{2}}$, then the rows of $\widetilde{A}$ are i.i.d standard Gaussian vectors. By the Johnson-Lindenstrauss Lemma, the projection $F: \mathbb{R}^n\rightarrow \mathbb{R}^m$ with $F(t) = \frac{1}{\sqrt{m}}A\Sigma^{-\frac{1}{2}}t$ preserves distances in the sense that, given any $\epsilon\in (0,1)$, with probability at least $1-e^{-\mathcal{O}(\epsilon^2 m/k)}$, for all $t_1, t_2\in T$,
\begin{equation*}
(1-\epsilon)\|t_1 - t_2\|\leq\|F(t_1)-F(t_2)\|\leq(1+\epsilon)\|t_1-t_2\|
\end{equation*}
provided that $m \geq \mathcal{O}(\frac{k}{\epsilon^2}\log\frac{Lr}{\delta}).$
We may choose $\epsilon = 0.5$ and hence
\begin{equation}\label{J-L_A}
\frac{1}{\sqrt{m}}\|AG(z_1)-AG(z_2)\|
\geq 0.5\|\Sigma^{\frac{1}{2}}(G(z_1)-G(z_2))\| \geq 0.5 \sqrt{\sigma_{min}(\Sigma)} \|G(z_1)-G(z_2)\|,
\end{equation}
holds with probability at least $1-e^{-\mathcal{O}(m/k)}.$
 It follows from   \eqref{ineq_G}-\eqref{J-L_A} that
\begin{equation*}
\begin{array}{ll}
\frac{1}{\sqrt{m}}\|AG(z)-AG(z')\|
&\geq  \frac{1}{\sqrt{m}}\|AG(z_1)-AG(z_2)\| - \mathcal{O}(\delta) \\ [1.5ex]
 &\geq 0.5 \sqrt{\sigma_{min}(\Sigma)} \|G(z_1)-G(z_2)\| - \mathcal{O}(\delta) \\ [1.5ex]
&\geq  0.5 \sqrt{\sigma_{min}(\Sigma)} \|G(z)-G(z')\| - \mathcal{O}(\delta).
\end{array}
\end{equation*}
The above inequality implies that  $A$ satisfy the S-REC$(G(\mathcal{B}^k_2(r)),0.5\sqrt{\sigma_{min}(\Sigma)}, O(\delta))$ with probability at least $1-e^{-\Omega(m/k)}$, for $m \geq \mathcal{O}(k\log\frac{Lr}{\delta}).$
\end{proof}

Next Lemma shows that least square decoder can be good in the subgaussian setting.
\begin{lemma}\label{linf} \cite{HuangJiao:2018}
Let $A\in\mathbb{R}^{m\times n}$, whose rows  $a_i\in\mathbb{R}^n$, are independent subgaussian vectors with mean $\mathbf{0}$ and covariance matrix $\Sigma$. If  $m\geq \mathcal{O}(\log n)$,  then
\begin{equation}
\left\|\sum_{i=1}^{m}\left(\mathbb{E}\left[a_{i} y_{i}\right]-a_{i} y_{i}\right) / m\right\|_{\infty} \leq  \mathcal{O}(\sqrt{\frac{\log n}{m}})
\end{equation}
 holds with probability at least $1-\frac{2}{n^3}$,
and
\begin{equation}
\left\|A^T A / m-\Sigma\right\|_{\infty} \leq  \mathcal{O}(\sqrt{\frac{\log n}{m}})
\end{equation}
holds with probability at least $1-\frac{1}{n^2}$, where $\|\Psi\|_\infty$ is the maximum pointwise absolute value of $\Psi$.
\end{lemma}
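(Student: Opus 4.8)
The plan is to read both inequalities as entrywise maximum-deviation bounds and to reduce each to a one-dimensional tail estimate followed by a union bound over the $n$ (respectively $n^2$) entries. The only structural facts needed are that the summands are independent across $i\in[m]$---which holds because the triples $(a_i,\eta_i,\epsilon_i)$ are i.i.d.---and that each summand has a controlled subgaussian or subexponential norm inherited from the subgaussian parameter of the rows, written $K:=\max_{i,j}\|a_{ij}\|_{\psi_2}$.

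For the first bound, fix a coordinate $j\in[n]$ and consider $\frac1m\sum_{i=1}^m(a_{ij}y_i-\mathbb E[a_{ij}y_i])$, where $a_{ij}$ is the $j$th entry of the row $a_i$. Since $y_i\in\{-1,1\}$ by model \eqref{setup}, the product $a_{ij}y_i$ inherits the subgaussian norm of $a_{ij}$, so $\|a_{ij}y_i\|_{\psi_2}\le K$. The centered summands are thus independent, mean-zero, and subgaussian, and the general Hoeffding inequality gives, for every $t>0$,
\[
\mathbb P\Big[\big|\tfrac1m\textstyle\sum_{i=1}^m(a_{ij}y_i-\mathbb E[a_{ij}y_i])\big|>t\Big]\le 2\exp(-c_0\,m\,t^2/K^2).
\]
A union bound over the $n$ coordinates multiplies the right-hand side by $n$; choosing $t=C\sqrt{\log n/m}$ with $C$ large enough forces the total failure probability below $2/n^3$, which is the first claim.

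For the second bound the summands are quadratic in the entries. Fix an index pair $(j,k)$; the $(j,k)$ entry of $A^TA/m-\Sigma$ is $\frac1m\sum_{i=1}^m(a_{ij}a_{ik}-\Sigma_{jk})$, and $\mathbb E[a_{ij}a_{ik}]=\Sigma_{jk}$, so the summands are centered. Each $a_{ij}a_{ik}$ is a product of two subgaussian variables, hence subexponential with $\|a_{ij}a_{ik}\|_{\psi_1}\le\|a_{ij}\|_{\psi_2}\|a_{ik}\|_{\psi_2}\le K^2$. Bernstein's inequality for independent centered subexponential variables then yields
\[
\mathbb P\Big[\big|\tfrac1m\textstyle\sum_{i=1}^m(a_{ij}a_{ik}-\Sigma_{jk})\big|>t\Big]\le 2\exp\big(-c_0\,m\,\min(t^2/K^4,\,t/K^2)\big),
\]
and a union bound over the $n^2$ entries multiplies this by $n^2$. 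Taking $t=C\sqrt{\log n/m}$ with $C$ large gives failure probability at most $1/n^2$.

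The main obstacle, and the only place the hypothesis $m\ge\mathcal O(\log n)$ is genuinely used, is the two-regime Bernstein bound in the second part. Its exponent is $\min(t^2,t)$ up to constants, so the favorable rate $m t^2$ is available only when $t\lesssim1$; with $t=C\sqrt{\log n/m}$ this is precisely $m\gtrsim\log n$. Once $t<1$ is secured the minimum reduces to $t^2$ and the union-bound calculation closes, after bookkeeping $C$ so that $n^2\exp(-c_0 m t^2)\le n^{-2}$. The first bound requires no such care, since the Hoeffding tail is genuinely subgaussian for all $t$.
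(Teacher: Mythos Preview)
Your argument is correct. The paper itself does not supply a proof of this lemma; it simply quotes the statement from \cite{HuangJiao:2018} and moves on, so there is no in-paper proof to compare against. Your approach---Hoeffding for the subgaussian sums in the first inequality, Bernstein for the subexponential products in the second, each followed by a union bound---is exactly the standard route that the cited reference takes, and your identification of the role of the hypothesis $m\ge\mathcal O(\log n)$ (forcing the Bernstein exponent into its quadratic regime) is the right explanation.

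One small remark worth making explicit: in the first part you write that $a_{ij}y_i$ ``inherits the subgaussian norm of $a_{ij}$'' because $y_i\in\{-1,1\}$. This is true, but since $y_i$ depends on $a_i$ through model \eqref{setup} it is not just ``multiplication by an independent sign.'' The clean justification is that $|a_{ij}y_i|=|a_{ij}|$ almost surely, so $\mathbb E\exp\bigl((a_{ij}y_i)^2/t^2\bigr)\le\mathbb E\exp\bigl(a_{ij}^2/t^2\bigr)$ and hence $\|a_{ij}y_i\|_{\psi_2}\le\|a_{ij}\|_{\psi_2}$. Everything else goes through as you wrote.
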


Now we are ready to prove Theorem \ref{thel1}.

\begin{proof}
Recall that
$$y = \eta\odot sign(Ax^*+\epsilon)$$ and
\begin{equation}
 \widehat{z} = \arg\min_{z\in \mathcal{B}_2^k(1)}\frac{1}{2m}\|y - AG(z)\|^2.
\end{equation}
Our goal is to  bound  $\|G(\widehat{z})-\widetilde{x^*}\|_2$ with $\widetilde{x^*}=cx^*$.
By triangle inequality,
\begin{equation*}
\begin{aligned}
\|G(\widehat{z})-\widetilde{x^*}\|
&=\|G(\widehat{z})- G(\overline{z}) + G(\overline{z})-\widetilde{x^*}\|\\
&\leq \|G(\widehat{z})- G(\overline{z})\| + \|G(\overline{z})-\widetilde{x^*}\|,
\end{aligned}
\end{equation*}
where $\overline{z}\in \mathcal{B}_{2}^{k}(1)$ is chosen such that  $\|G(\overline{z})-\widetilde{x^*}\|_1 \leq \tau$ by the assumption $x^*\in \mathcal{G}_{k,\tau,1}(1)$, 
we have
\begin{equation}\label{the1}
\|G(\widehat{z})-\widetilde{x^*}\| \leq \|G(\widehat{z})- G(\overline{z})\|+ \tau.
\end{equation}
From the definition of $\widehat{z}$ we have
$$\|AG(\widehat{z}) - y\|^2 \leq \|AG(\overline{z}) - y\|^2.$$
Direct computation shows that
\begin{equation*}
\begin{aligned}
0 &\geq \|AG(\widehat{z}) - y\|^2  -\|AG(\overline{z}) - y\|^2 = \|AG(\widehat{z}) - AG(\overline{z})+ AG(\overline{z})- y\|^2 -\|AG(\overline{z}) - y\|^2 \\
& = \|AG(\widehat{z}) - AG(\overline{z})\|^2  + 2\langle G(\widehat{z}) - G(\overline{z}), A^T(AG(\overline{z})- y) \rangle,
\end{aligned}
\end{equation*}
which hence
\begin{equation}\label{eql}
\begin{array}{ll}
\frac{1}{m}\|AG(\widehat{z}) - AG(\overline{z})\|^2
&\leq 2\langle G(\widehat{z}) - G(\overline{z}), \frac{1}{m}A^T(y -AG(\overline{z})) \rangle\\ [1.5ex]
&\leq 2\|G(\widehat{z}) - G(\overline{z})\|_1\|\frac{1}{m}A^T(y -AG(\overline{z}))\|_\infty \\[1.5ex]
&\leq 4 \|\frac{1}{m}A^T(y -AG(\overline{z}))\|_\infty,
\end{array}
\end{equation}
where the last step is from the assumption $G(\mathcal{B}_2^k(1))\subset \mathcal{B}_1^{n} \Rightarrow \|G(\widehat{z}) - G(\overline{z})\|_1\leq 2$. Next we bound $\frac{1}{m}\|A^T(y - AG(\overline{z}))\|_\infty$.
By triangle inequality,
\begin{equation}\label{the5}
\begin{array}{ll}
\frac{1}{m}\|A^T(y - AG(\overline{z}))\|_\infty
&= \frac{1}{m}\|A^T(y - A\widetilde{x^*} + A\widetilde{x^*} - AG(\overline{z}))\|_\infty \\[1.5ex]
&\leq\frac{1}{m}\|A^T(y - A\widetilde{x^*})\|_\infty + \frac{1}{m}\|A^T(A\widetilde{x^*} - AG(\overline{z}))\|_\infty.
\end{array}
\end{equation}
The first term in \eqref{the5} can be estimated by
\begin{equation}\label{theq7}
\begin{array}{ll}
\frac{1}{m}\|A^T(y - A\widetilde{x^*})\|_\infty
&=\|\frac{1}{m}A^Ty -\Sigma\widetilde{x^*} + \Sigma\widetilde{x^*} - \frac{1}{m}A^TA\widetilde{x^*}\|_\infty\\[1.5ex]
&\leq\|\frac{1}{m}A^Ty -\Sigma\widetilde{x^*}\|_\infty + \|\Sigma\widetilde{x^*} - \frac{1}{m}A^TA\widetilde{x^*}\|_\infty\\[1.5ex]
&\leq\frac{1}{m}\|A^Ty - \mathbb{E}[A^Ty]\|_\infty + \|\Sigma - \frac{1}{m}A^TA\|_\infty\|\widetilde{x^*}\|_1\\[1.5ex]
&= \frac{1}{m}\|\sum_{i=1}^m (A_iy_i-\mathbb{E}[A_iy_i])\|_\infty + \|\Sigma - \frac{1}{m}A^TA\|_\infty\|\widetilde{x^*}\|_1\\[1.5ex]
& \leq \mathcal{O}\left(\sqrt{\frac{\log n}{m}}\right),
\end{array}
\end{equation}
where the last inequality is from Lemma \ref{linf}. To estimate the second term in \eqref{the5}, denote by $\widetilde{\Delta} = \widetilde{x^*} - G(\overline{z})$,
we then have
\begin{equation}\label{the8}
\begin{array}{ll}
\frac{1}{m}\|A^T(A\widetilde{x^*} - AG(\overline{z}))\|_\infty
&= \frac{1}{m}\|A^TA\widetilde{\Delta}\|_\infty\\[1.5ex]
&= \|\frac{1}{m}A^TA\widetilde{\Delta} - \Sigma\widetilde{\Delta} + \Sigma\widetilde{\Delta}\|_\infty\\[1.5ex]
&\leq \|(\frac{1}{m}A^TA - \Sigma)\widetilde{\Delta}\|_\infty + \|\Sigma\widetilde{\Delta}\|_\infty\\[1.5ex]
&\leq \|\widetilde{\Delta}\|_1(\|\frac{1}{m}A^TA - \Sigma\|_\infty + \|\Sigma\|_\infty)\\[1.5ex]
&\leq \mathcal{O}\left(\sqrt{\frac{\log n}{m}}+1\right)\tau.
\end{array}
\end{equation}

From lemma \ref{A_REC}, $A$ satisfies the S-REC$(G(\mathcal{B}_2^k(1)), 0.5\sqrt{\sigma_{min}(\Sigma)}, O(\delta)),$ with probability $1 - e^{-O(m/k)}$ as long as
 $m = O(k\log\frac{L}{\delta})$, i.e.,
 \begin{equation}\label{the3}
   \frac{1}{m}\|AG(\widehat{z}) - AG(\overline{z})\|^2
   \geq 0.5\sqrt{\sigma_{min}(\Sigma)}\|G(\widehat{z}) - G(\overline{z})\|^2 - \mathcal{O}(\delta).
   \end{equation}
Substituting \eqref{the5} - \eqref{the3} into \eqref{eql} we obtain

\begin{equation}\label{eq2}
0.5\sqrt{\sigma_{min}(\Sigma)}\|G(\widehat{z}) - G(\overline{z})\|^2 - \mathcal{O}(\delta) \leq \mathcal{O}\left(\sqrt{\frac{\log n}{m}}+\tau + \sqrt{\frac{\log n}{m}}\tau\right).
\end{equation}
We may choose $\delta = O(\tau)$ in \eqref{eq2} and substituted it into \eqref{the1}, we conclude
\begin{equation*}
\|G(\widehat{z}) - \widetilde{x}^*\| \leq \mathcal{O}\left(\left(\frac{\log n}{m}\right)^{1/4} + \sqrt{\tau}\right).
\end{equation*}
\end{proof}

Obviously, $\tau$ measures the approximation error between the target $x^*$ and the generator $G$. If we assume that $\tau$ is smaller than $\mathcal{O}((\frac{\log n}{m})^{1/2})$,
Theorem \ref{thel1} shows that under that approximate low generative dimension prior, our proposed least decoder  (\ref{ls1})-(\ref{ls2})
can achieve an estimation error  $\mathcal{O}((\log n/m)^{1/4})$  provide that the number of samples $m \geq \mathcal{O}(\max \{\log n, k\log \frac{L}{\tau}\})$.
 Similar results has been established for 1-bit CS under the sparsity prior $\|x^*\|_0\leq s$ in the literatures. For example,
\cite{PlanVershynincpam:2013} proposed  a linear programming  decoder
 \begin{equation*}
 x_{\mathrm{lp}}\in \arg\min_{x\in \mathbb{R}^n} \|x\|_1 \quad \mathrm{s.t.} \quad  y \odot A x\geq 0  \quad \|A x\|_1  =  m.
 \end{equation*}
 in the   noiseless setting  without sign flips. It has been proved in \cite{PlanVershynincpam:2013} that
 $$\|\frac{x_{\mathrm{lp}}}{\|x_{\mathrm{lp}}\|} - x^*\|\leq \mathcal{{O}}((\frac{s\log n}{m})^{1/5}),$$
 provided that $m=\mathcal{O}(s\log^2(n/s))$.
Later,  in  \cite{PlanVershynin:2013}, another convex decoder
  \begin{equation*}
 x_{\mathrm{cv}} \in \arg\min_{x\in \mathbb{R}^n} -\langle y, A x\rangle/m \quad \mathrm{s.t.} \quad \|x\|_1 \leq s,  \quad \|x\| \leq 1,
 \end{equation*}
 is shown to achieve a estimation error bound
$$\|\frac{x_{\mathrm{cv}}}{\|x_{\mathrm{cv}}\|} - x^*\|\leq \mathcal{{O}}((\frac{s\log n}{m})^{1/4}).$$

Although the order of estimation error   proved in Theorem  \ref{thel1} does  not depend on the Lipschtiz constant of the generator $G$  which is usually  exponential order of the depth of   the  neural networks  \cite{bora2017compressed},  it  is  sub-optimal.  Next we  improve the estimation error bound  by using the tool of local (Gaussian) mean width. The definition of local mean width is given below, it can also be found in \cite{PlanVershynin:2016,PlanVershynin:2017}.
\begin{definition}\label{gmw}
Let $S\subseteq\mathbb{R}^n$. The local mean width of $S$ is a function of scale $t \geq 0$ defined as
$$
\omega_{t}(S)=\mathbb{E}_{g\sim \mathcal{N}(\mathrm{0},\mathbf{I})} \left[\sup _{x \in S \cap t \mathcal{B}_2^{n}(1)}\langle x, g\rangle\right].
$$
\end{definition}

\begin{theorem}\label{theorem2}
Given an $L$-Lipschitz generator satisfying  $G(\mathcal{B}_2^k(r))\subset \mathcal{S}_2^{n-1}$.
Assume the 1-bit CS model \eqref{setup} holds with  $x^* \in \mathcal{G}_{k,\tau}(r)$,
and  $m \geq \mathcal{O}\left(\max\{k\log\frac{Lrn}{k},\log n\}\right)$,
 then with high  probability, 
 the least square decoder defined in (\ref{ls2})-(\ref{ls1}) satisfies
\begin{equation*}
\|\hat{x}-c{x^*}\|_2\leq \mathcal{O}\left(\sqrt{\frac{k}{m}\log\frac{rLn}{k\gamma}}\right) +\mathcal{O}(\frac{\tau n}{m}),
\end{equation*}
for $\gamma=\max\{\tau,\frac{k}{m}\log \frac{Lrn}{k}+\sqrt{\frac{\log n}{m}}\}$. If the approximation error satisfies $\tau = \mathcal{O} (\frac{\sqrt{mk\log (Ln)}}{n})$ and $r = \mathcal{O}(1)$, then we have
\begin{equation*}
\|\hat{x}-c{x^*}\|_2\leq \mathcal{O}\left(\sqrt{\frac{k}{m}\log(Ln)}\right).
\end{equation*}
\end{theorem}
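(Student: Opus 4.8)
The plan is to run the same optimality-based argument as in Theorem \ref{thel1}, but to replace the crude $\ell_1/\ell_\infty$ duality bound on the cross term by a sharper estimate based on the local mean width from Definition \ref{gmw}. Writing $\widetilde{x^*}=cx^*$, choosing $\overline{z}\in\mathcal{B}_2^k(r)$ with $\|G(\overline{z})-\widetilde{x^*}\|\leq\tau$, and setting $u=G(\widehat{z})-G(\overline{z})$, the optimality of $\widehat{z}$ gives, exactly as in the derivation of \eqref{eql},
$$\frac{1}{m}\|Au\|^2\leq 2\langle u,h\rangle+\frac{2}{m}\langle Au,Aw\rangle,$$
where $w=\widetilde{x^*}-G(\overline{z})$ (so $\|w\|\leq\tau$) and $h=\frac{1}{m}A^T(y-A\widetilde{x^*})$. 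The structural fact that makes this tractable, inherited from the 1-bit model and already exploited in Lemma \ref{linf}, is that $\mathbb{E}[\frac{1}{m}A^Ty]=\Sigma\widetilde{x^*}$, so $h$ is a \emph{mean-zero} random vector.

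First I would lower-bound the left-hand side by S-REC (Lemma \ref{A_REC}): with the choice $\delta=\mathcal{O}(\gamma)$ one has $\frac{1}{m}\|Au\|^2\geq\frac{1}{2}\sqrt{\sigma_{\min}(\Sigma)}\|u\|^2-\mathcal{O}(\delta)$. Next I would handle the bias term $\frac{2}{m}\langle Au,Aw\rangle=2\langle u,\frac{1}{m}A^TAw\rangle$ by bounding $\|\frac{1}{m}A^TA\|_{op}=\frac{1}{m}\|A\|_{op}^2=\mathcal{O}(n/m)$ (the operator norm of a thin Gaussian matrix), which yields $\frac{2}{m}\langle Au,Aw\rangle\lesssim\|u\|\cdot\frac{n\tau}{m}$ and is precisely the source of the $\mathcal{O}(\tau n/m)$ term in the conclusion. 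The crux is the mean-zero term: letting $t=\|u\|$ and $S=G(\mathcal{B}_2^k(r))$, I would bound $\langle u,h\rangle\leq\sup_{v\in(S-S)\cap t\mathcal{B}_2^n(1)}\langle v,h\rangle$ by a chaining/concentration argument. Since for fixed $v$ the variable $\langle v,h\rangle$ is subgaussian with parameter $\lesssim\sqrt{\sigma_{\max}/m}\,\|v\|$ (because $a_iy_i=\pm a_i$ is Gaussian), Talagrand/Dudley comparison gives, with high probability, $\sup_v\langle v,h\rangle\lesssim\frac{1}{\sqrt{m}}\omega_t(S-S)+t\sqrt{\frac{\log n}{m}}$, the second piece being the deviation term at confidence $1-\mathcal{O}(n^{-c})$, which is where the $\sqrt{\log n/m}$ appearing in $\gamma$ comes from.

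To turn this into an explicit rate I would estimate the local mean width of the generative set via Dudley's integral: by Lemma \ref{cov} and $L$-Lipschitzness the $\epsilon$-covering number of $S$ is at most $(4Lr/\epsilon)^k$, so $\omega_t(S-S)\lesssim\int_0^t\sqrt{k\log(Lr/\epsilon)}\,d\epsilon\lesssim t\sqrt{k\log\frac{Lrn}{kt}}$ (up to the precise logarithmic factor forced by the covering estimate). Collecting the three bounds produces the self-referential inequality $t\lesssim\sqrt{\frac{k}{m}\log\frac{Lrn}{kt}}+\frac{n\tau}{m}$, which I would resolve by a peeling/localization argument: setting $\gamma=\max\{\tau,\frac{k}{m}\log\frac{Lrn}{k}+\sqrt{\frac{\log n}{m}}\}$, one shows that the scale $t$ cannot exceed a multiple of $\gamma$ without contradiction, so $t$ inside the logarithm may be replaced by $\gamma$, giving $t\lesssim\sqrt{\frac{k}{m}\log\frac{rLn}{k\gamma}}+\frac{n\tau}{m}$. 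The triangle inequality $\|\widehat{x}-cx^*\|\leq t+\tau$ then yields the stated estimate, and substituting $\tau=\mathcal{O}(\sqrt{mk\log(Ln)}/n)$ and $r=\mathcal{O}(1)$ (so that $\gamma$ is of order $\sqrt{(k/m)\log(Ln)}$ up to logarithmic factors) collapses the bound to $\mathcal{O}(\sqrt{(k/m)\log(Ln)})$.

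I expect the main obstacle to be making the chaining bound on $\sup_v\langle v,h\rangle$ rigorous \emph{uniformly over the scale} $t$, i.e. the localization/peeling step: because $t=\|G(\widehat{z})-G(\overline{z})\|$ is itself random, the concentration must hold simultaneously over a dyadic range of scales, and the subgaussian-increment estimate for the process $v\mapsto\langle v,h\rangle$ must be verified carefully in the presence of the sign flips $\eta$ and the pre-quantization noise $\epsilon$, so that the mean-zero identity and the variance proxy survive the coupling of these sources of randomness.
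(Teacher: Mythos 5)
Your overall architecture coincides with the paper's: the optimality inequality, a tangent cone localized at scale $\gamma$, a local-mean-width bound of order $\sqrt{k\log(rLn/(k\gamma))}$ (the paper's Lemma \ref{gw}, proved via an $\epsilon$-net of size $(8Lr/(\gamma\epsilon))^k$ and Massart's finite class lemma rather than Dudley's integral), and the spectral-norm estimate $\|A^TA/m\|=\mathcal{O}(n/m)$ from Lemma \ref{spectralnorm} for the bias term, exactly as in \eqref{eq31}. But two of your key steps fail as stated. First, the lower bound: S-REC with additive slack $\delta=\mathcal{O}(\gamma)$ gives $t^2\lesssim tB+\gamma$ with $B=\sqrt{(k/m)\log(rLn/(k\gamma))}$, hence only $t\lesssim B+\sqrt{\gamma}$; since $\gamma\geq\sqrt{\log n/m}$ by definition, this reintroduces a $(\log n/m)^{1/4}$ term, i.e.\ the Theorem \ref{thel1} rate, not the sharp rate claimed. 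You need a \emph{multiplicative} lower bound on the cone: either take $\delta=\mathcal{O}(\gamma^2)$ in Lemma \ref{A_REC}, so that for $v=h/\|h\|$ with $\|h\|>\gamma$ one gets $\frac{1}{m}\|Av\|^2\geq\frac{1}{2}\sqrt{\sigma_{\min}(\Sigma)}-\delta/\gamma^2$ (costing only a doubled logarithm in the sample complexity), or do what the paper does and invoke the uniform lower isometry \eqref{eqth25} over $\mathcal{D}=D_{\gamma}(S,G(\overline{z}))\cap\mathcal{S}_2^{n-1}$, imported from the proof of Theorem 1.4 in \cite{PlanVershynin:2016} (Lemma \ref{pv}), which is precisely what justifies the first inequality in \eqref{eq23}.

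Second, your chaining bound on the mean-zero term rests on the claim that $v\mapsto\langle v,h\rangle$ with $h=\frac{1}{m}A^T(y-A\widetilde{x^*})$ has subgaussian increments ``because $a_iy_i=\pm a_i$ is Gaussian.'' This is not correct: $h=\frac{1}{m}\sum_i a_i(y_i-\langle a_i,\widetilde{x^*}\rangle)$ contains the quadratic component $a_i\langle a_i,\widetilde{x^*}\rangle$, whose linear marginals are sub-exponential rather than subgaussian (and $a_iy_i$ is itself not Gaussian, since $y_i$ is a sign depending on $a_i$). Plain subgaussian Dudley chaining therefore does not yield $\sup_{v\in\mathcal{D}}\langle v,h\rangle\lesssim\omega_1(\mathcal{D})/\sqrt{m}$; one needs mixed-tail (Bernstein-type) chaining, or, as the paper does, the ready-made bound \eqref{eqth26} of Lemma \ref{pv}, proved in \cite{PlanVershynin:2016} for exactly this process, where the independence of $\eta$, $\epsilon$, $a_i$ and the identity $\mathbb{E}[\frac{1}{m}A^Ty]=c\Sigma x^*$ enter. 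You flagged this yourself as ``the main obstacle,'' and it is indeed the step your sketch leaves unproved. Finally, your multi-scale peeling over the random scale $t$ is unnecessary: the paper fixes $\gamma$ in advance and simply splits on $\|h\|\leq\gamma$ (where \eqref{eqth21} already gives the conclusion) versus $h\in D_{\gamma}(S,G(\overline{z}))$, so a single application of Lemma \ref{pv} suffices and no union bound over dyadic scales is required.
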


First we do some preparing work before the proof. Similar as the proof to Theorem \ref{thel1}, let $\overline{z}\in \mathcal{B}_{2}^{k}(r)$ satisfying  $\|G(\overline{z})-\widetilde{x^*}\| \leq \tau$. By triangle inequality \eqref{the1}, we have
\begin{equation}\label{eqth21}
\|G(\widehat{z})-\widetilde{x^*}\| \leq \|G(\widehat{z})- G(\overline{z})\|+ \tau.
\end{equation}
Let $h = G(\hat{z})-G(\overline{z})$, $\gamma=\max\{\tau,\frac{k}{m}\log \frac{Lrn}{k}+\sqrt{\frac{\log n}{m}}\}$, $S = G(\mathcal{B}_2^k(r))$, and $D_{\gamma}(S,G(\overline{z}))$ be the  tangent cone which is defined by
$$ D_{\gamma}(S,G(\overline{z})) = \{t u:  t>0, u = G(z)-G(\overline{z}), \|u\|>\gamma\}.$$
If $\|h\|\leq \gamma $ this Theorem is trivial by (\ref{eqth21}), otherwise $\|h\|> \gamma $, then $h\in D_{\gamma}(S,G(\overline{z}))$.
Let $$\mathcal{D} = D_{\gamma}(S,G(\overline{z})) \cap \mathcal{S}_{2}^{n-1}.$$
We need the following two lemmas to proceed the proof.
\begin{lemma}\label{pv}
 With probability at least $0.99$, both
 \begin{equation}\label{eqth25}
 \inf _{v \in \mathcal{D}} \frac{1}{\sqrt{m}}\|A v\|_{2} \geq C_0
 \end{equation}
 \begin{equation}\label{eqth26}
 \sup _{v \in \mathcal{D}}\frac{1}{m}\left\langle v, A^{T} (y -A\widetilde{x^*})\right\rangle \leq C\frac{\omega_1(\mathcal{D})}{\sqrt{m}}.
 \end{equation}
 hold, where $\omega_1(\mathcal{D})$ is the local (Gaussian) mean width of $\mathcal{D}$ given  in Definition \ref{gmw}.
 \end{lemma}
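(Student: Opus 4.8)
The plan is to establish the two estimates \eqref{eqth25} and \eqref{eqth26} separately, each with probability at least $0.995$, and then combine them by a union bound to obtain the stated probability $0.99$. The first is a uniform lower isometry (restricted eigenvalue) bound over the set $\mathcal{D}$, and the second is a uniform upper bound on a mean-zero multiplier process; both are controlled by the Gaussian mean width, and since $\mathcal{D}\subseteq\mathcal{S}_2^{n-1}$ we have $\mathcal{D}\cap\mathcal{B}_2^n(1)=\mathcal{D}$, so $\omega_1(\mathcal{D})$ coincides with the ordinary Gaussian width $w(\mathcal{D})$.

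For the lower bound \eqref{eqth25}, I would reduce to a standard Gaussian matrix and invoke Gordon's comparison inequality (equivalently the matrix deviation inequality). Writing the rows as $a_i=\Sigma^{1/2}g_i$ with $g_i\sim\mathcal{N}(\mathbf{0},\mathbf{I})$, we get $A=G\Sigma^{1/2}$ with $G$ having i.i.d.\ standard Gaussian rows, hence $\frac{1}{\sqrt m}\|Av\|=\frac{1}{\sqrt m}\|G(\Sigma^{1/2}v)\|$. Setting $u=\Sigma^{1/2}v$ and noting $\|u\|\geq\sqrt{\sigma_{min}(\Sigma)}$ for $v\in\mathcal{S}_2^{n-1}$, Gordon's theorem yields, uniformly over $T=\Sigma^{1/2}\mathcal{D}$,
\[
\inf_{v\in\mathcal{D}}\frac{1}{\sqrt m}\|Av\|\;\geq\;\sqrt{\sigma_{min}(\Sigma)}-\frac{C\,w(T)}{\sqrt m}-t
\]
with probability $1-e^{-O(t^2m)}$. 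It then remains to bound $w(T)$, which is comparable to $w(\mathcal{D})$ up to $\sqrt{\sigma_{max}(\Sigma)}$; since $\mathcal{D}$ is built from the $L$-Lipschitz image $S=G(\mathcal{B}_2^k(r))$, the same covering estimate used in Lemmas \ref{A_delta} and \ref{A_REC} gives $w(\mathcal{D})=O(\sqrt{k\log\frac{Lrn}{k}})$. Under the assumed $m\geq O(k\log\frac{Lrn}{k})$, both the width term and the fluctuation $t$ are dominated, leaving a strictly positive constant $C_0$.

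For the multiplier bound \eqref{eqth26}, the key observation is that $\frac{1}{m}A^T(y-A\widetilde{x^*})$ is mean zero: the scaling constant $c$ in $\widetilde{x^*}=cx^*$ is exactly the one making $\mathbb{E}[a_iy_i]=\Sigma\widetilde{x^*}$ (the identity already used in \eqref{theq7}), while $\mathbb{E}[\frac{1}{m}A^TA\widetilde{x^*}]=\Sigma\widetilde{x^*}$, so the two expectations cancel. I would then write
\[
\frac{1}{m}\langle v, A^T(y-A\widetilde{x^*})\rangle=\frac{1}{m}\sum_{i=1}^m\langle a_i,v\rangle\big(y_i-\langle a_i,\widetilde{x^*}\rangle\big),
\]
a mean-zero empirical process indexed by $v\in\mathcal{D}$, and split it into a multiplier part involving the bounded labels $y_i$ (sub-Gaussian in $a_i$) and a quadratic part involving $\langle a_i,v\rangle\langle a_i,\widetilde{x^*}\rangle$ (sub-exponential). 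Each supremum over $\mathcal{D}$ is controlled by a Gaussian mean-width bound — Talagrand's comparison together with the multiplier/product process inequalities of Plan and Vershynin \cite{PlanVershynin:2016,PlanVershynin:2017} — giving a bound of order $w(\mathcal{D})/\sqrt m=\omega_1(\mathcal{D})/\sqrt m$.

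The main obstacle is the second estimate. The difficulty is twofold: the label $y_i$ is not independent of $a_i$, so $y_ia_i$ is a genuine nonlinear multiplier rather than a product of independent factors, and the quadratic term requires a uniform sub-exponential (Bernstein-type) concentration over the entire tangent cone intersected with the sphere. Matching the bound to the \emph{local} mean width $\omega_1(\mathcal{D})$, rather than a cruder global quantity, is precisely what produces the sharp sample-complexity rate in Theorem \ref{theorem2}, so the chaining must be carried out at the correct scale. By contrast, the first estimate is a routine application of Gaussian comparison once $w(\Sigma^{1/2}\mathcal{D})$ has been bounded.
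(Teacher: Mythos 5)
Your proposal follows essentially the same route as the paper: the paper's proof of Lemma \ref{pv} is a one-line citation to the proof of Theorem 1.4 in \cite{PlanVershynin:2016}, and that proof consists of exactly your two estimates --- a Gordon-type lower isometry over the deterministic cone $\mathcal{D}$ (valid once $m \gtrsim \omega_1(\mathcal{D})^2$, as you note), and a bound of order $\omega_1(\mathcal{D})/\sqrt{m}$ on the multiplier process $v \mapsto \frac{1}{m}\langle v, A^T(y-A\widetilde{x^*})\rangle$, which is mean zero precisely because $c$ is chosen so that $\mathbb{E}[a_i y_i]=\Sigma\widetilde{x^*}$, combined by a union bound at constant probability. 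Your explicit whitening $\widetilde{A}=A\Sigma^{-1/2}$ is in fact a worthwhile addition, since \cite{PlanVershynin:2016} assumes identity covariance and the paper's citation silently elides this extension; the only slip is that the width of $\mathcal{D}$ carries a $\log(1/\gamma)$ factor from the normalization at scale $\gamma$, i.e.\ $\omega_1(\mathcal{D})=\mathcal{O}\bigl(\sqrt{k\log\frac{rLn}{k\gamma}}\bigr)$ as in Lemma \ref{gw}, not merely $\mathcal{O}\bigl(\sqrt{k\log\frac{Lrn}{k}}\bigr)$.
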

\begin{proof}
The results can be found in the proof to Theorem 1.4 in \cite{PlanVershynin:2016}.
\end{proof}

 \begin{lemma}\label{gw}
$$\omega_1(\mathcal{D})=  \mathcal{O}\left(\sqrt{k\log(\frac{rLn}{k\gamma})}\right).$$
 \end{lemma}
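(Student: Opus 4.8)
The plan is to bound the Gaussian process $\langle v,g\rangle$ over $\mathcal{D}$ by a single-scale net argument, exploiting that $\mathcal{D}$ already lies on the sphere. Since $\mathcal{D}\subseteq\mathcal{S}_2^{n-1}$, the local constraint $\cap\, t\mathcal{B}_2^n(1)$ at scale $t=1$ is vacuous, so $\omega_1(\mathcal{D})=\mathbb{E}_{g}\sup_{v\in\mathcal{D}}\langle v,g\rangle$. Every $v\in\mathcal{D}$ has the form $v=(G(z)-G(\overline z))/\|G(z)-G(\overline z)\|$ for some $z\in\mathcal{B}_2^k(r)$ with $\|G(z)-G(\overline z)\|>\gamma$. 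The first step is to transport the $k$-dimensional metric entropy of the domain to these normalized directions.

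Next I would invoke Lemma \ref{cov} to fix an $\epsilon$-net $N_\epsilon$ of $\mathcal{B}_2^k(r)$ with $\log|N_\epsilon|\leq k\log(4r/\epsilon)$. For a given $v$, choose $z_1\in N_\epsilon$ with $\|z-z_1\|\leq\epsilon$, so that $L$-Lipschitzness gives $\|G(z)-G(z_1)\|\leq L\epsilon$. Setting $u=G(z)-G(\overline z)$, $u_1=G(z_1)-G(\overline z)$, and $v_1=u_1/\|u_1\|$, the elementary normalization inequality $\bigl\|u/\|u\|-u_1/\|u_1\|\bigr\|\leq 2\|u-u_1\|/\|u\|$ together with $\|u\|>\gamma$ yields the uniform bound $\|v-v_1\|\leq 2L\epsilon/\gamma$. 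Thus the finite collection $\{\,(G(z_1)-G(\overline z))/\|G(z_1)-G(\overline z)\|: z_1\in N_\epsilon\,\}$, of cardinality at most $|N_\epsilon|$, forms a $(2L\epsilon/\gamma)$-net (in $\ell_2$) of $\mathcal{D}$ consisting of unit vectors.

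I would then split $\langle v,g\rangle=\langle v_1,g\rangle+\langle v-v_1,g\rangle$ and bound the two pieces separately. The maximum of $\langle v_1,g\rangle$ runs over at most $|N_\epsilon|$ fixed unit vectors, so the Gaussian maximal inequality gives $\mathbb{E}\max\langle v_1,g\rangle\leq\sqrt{2\log|N_\epsilon|}\leq\sqrt{2k\log(4r/\epsilon)}$. For the remainder, Cauchy--Schwarz and the uniform estimate $\|v-v_1\|\leq 2L\epsilon/\gamma$ give $\sup_v\langle v-v_1,g\rangle\leq (2L\epsilon/\gamma)\|g\|$, whence in expectation $\mathbb{E}\sup_v\langle v-v_1,g\rangle\leq (2L\epsilon/\gamma)\,\mathbb{E}\|g\|\leq (2L\epsilon/\gamma)\sqrt{n}$. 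Combining,
\[
\omega_1(\mathcal{D})\ \leq\ \sqrt{2k\log\tfrac{4r}{\epsilon}}\ +\ \frac{2L\epsilon}{\gamma}\sqrt{n}.
\]

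Finally I would optimize over $\epsilon$, choosing $\epsilon\asymp \gamma\sqrt{k}/(L\sqrt{n})$ so that the second term is $\Theta(\sqrt{k})$ and hence dominated by the first, while $4r/\epsilon\asymp rL\sqrt{n/k}/\gamma$ forces $\log(4r/\epsilon)=O\!\bigl(\log\tfrac{rLn}{k\gamma}\bigr)$; this delivers the claimed bound $\omega_1(\mathcal{D})=\mathcal{O}\!\bigl(\sqrt{k\log(rLn/(k\gamma))}\bigr)$. The step I expect to be the crux is controlling the perturbation term: normalization by $\|u\|$, which can be as small as $\gamma$, amplifies the net resolution by the factor $1/\gamma$, so I must take $\epsilon$ small (scaling linearly in $\gamma$) and then carefully verify that the resulting $\log(1/\epsilon)$ still collapses to $O(\log\tfrac{rLn}{k\gamma})$ and that the crude $\mathbb{E}\|g\|\leq\sqrt{n}$ bound is genuinely absorbed by this choice rather than reintroducing a $\sqrt{n}$ dependence. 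If the single-scale balance turned out to be lossy, the fallback would be a multi-resolution chaining over dyadic nets $N_{\delta/2^i}$, exactly as in the proof of Lemma \ref{A_delta}, summing the per-scale Gaussian maxima.
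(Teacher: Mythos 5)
Your proposal is correct and follows essentially the same route as the paper's proof: a single-scale net on $\mathcal{B}_2^k(r)$ pushed through $G$, the normalization inequality $\bigl\|a/\|a\|-b/\|b\|\bigr\|\leq 2\|a-b\|/\|a\|$ with $\|a\|>\gamma$, a Gaussian maximal (Massart) bound on the net, and the remainder $\epsilon\,\mathbb{E}\|g\|\leq\epsilon\sqrt{n}$. Your choice of domain resolution $\epsilon\asymp\gamma\sqrt{k}/(L\sqrt{n})$ is exactly the paper's $\frac{\epsilon\gamma}{2L}$-net with its final setting $\epsilon=\sqrt{k/n}$, so the two arguments coincide after a change of variables.
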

\begin{proof}
Recall that $$D_{\gamma}(S,G(\overline{z})) = \{t u:  t>0, u = G(z)-G(\overline{z}), \|u\|>\gamma\}.$$
  and  $$\mathcal{D} = D_{\gamma}(S,\overline{z}) \cap \mathcal{S}_{2}^{n-1}.$$
  Then $\mathcal{D}=\{\frac{ G(z)-G(\overline{z})}{\| G(z)-G(\overline{z})\|}: z\in \mathcal{B}_2^k(r), \| G(z)-G(\overline{z})\|>\gamma\}.$
Let $\mathcal{U}$ be $\frac{\epsilon\gamma}{2L}-$ net on $\mathcal{B}_2^k(r)$ satisfying $$\log|\mathcal{U}|\leq k\log(\frac{8Lr}{\gamma\epsilon}),$$ which can be obtained by Lemma \ref{cov}.
Then, $\mathcal{C} = \{\frac{G(u)-G(\overline{z})}{G(u)-G(\overline{z})}:u \in \mathcal{U} \ \ \mathrm{and} \ \ \|G(u)-G(\overline{z})\|\geq \gamma\}$ is a $\epsilon-$ net of $\mathcal{D}$. 
Indeed,
let $\frac{a}{\|a\|}$ with $a =  G(z)-G(\overline{z})$ be an arbitrary element in $\mathcal{D}$, and $u\in \mathcal{U}$ such that
$\|u-z\|\leq \frac{\epsilon\gamma}{2L}$. Let   $b=  G(u)-G(\overline{z})$,  then $\frac{b}{\|b\|}\in \mathcal{C}$ and satisfies that

\begin{align*}
& \left\| \frac{a}{\|a\|}-\frac{b}{\|b\|}\right\| = \left\|\frac{a\|b\|-b\|a\|}{\|a\|\|b\|}\right\|  \\
& \leq \frac{\|(a-b)\|b\|\|+\|b(\|b\|-\|a\|)\|}{\|a\|\|b\|}\\
& \leq 2\frac{\|a-b\|}{\|a\|} \leq 2L\frac{\frac{\epsilon\gamma}{2L}}{\gamma}=\epsilon,
\end{align*}
where in last inequality we use the facts that $G$ is $L$-Lipschitz and $\|a\|\geq \gamma.$
By Massart's finite class Lemma in \cite{boucheron2013concentration}, the  local Gaussian  width of $\mathcal{C}$  satisfies
\begin{equation}\label{gf}
\omega_1(\mathcal{C})\leq \sqrt{2k\log(\frac{16Lr}{\gamma\epsilon})}.
\end{equation}
Since $\forall x \in \mathcal{D}$, there exist $\tilde{x} \in \mathcal{C}$ such that
$\|x-\tilde{x}\|\leq \epsilon$. We then have $\forall g \sim \mathcal{N}(\mathbf{0}, \mathbf{I})$
\begin{equation*}
\langle g, x\rangle  \leq \langle g, \tilde{x}\rangle + \langle g, x-\tilde{x}\rangle \leq  \langle g, \tilde{x}\rangle + \epsilon \|g\|.
\end{equation*}
The above display and the definition of local Gaussian mean width and the fact $\mathcal{D}\subset \mathcal{B}_2^{n}(1)$
implies
\begin{align*}
\omega_1(\mathcal{D})&= \mathbb{E}[\sup_{x\in \mathcal{D}}\langle g, x\rangle]\leq \mathbb{E} [\sup_{\tilde{x} \in \mathcal{C}}\langle g, \tilde{x}\rangle + \epsilon \|g\|]\\
& \leq \omega_1(\mathcal{C})+\sqrt{n}\epsilon \\
&\leq  \sqrt{2k\log(\frac{16Lr}{\gamma\epsilon})} + \sqrt{n}\epsilon,
\end{align*}
where the second equality follow from $\mathbb{E}[\|g\|] = \sqrt{n}$, and in the third inequality  we use \eqref{gf}.
The proof will be finished by  setting $\epsilon = \sqrt{\frac{k}{n}}$.

\end{proof}

 \begin{lemma}\label{spectralnorm}
Let $a_i\in \mathbb{R}^{n}, i = 1,...n$ are i.i.d samples with mean $0$ and covariance matrix $\Sigma$. Denote $\Sigma_m = \sum_{i=1}^{m} a_ia_i^{T}/m$. Then for any $u \geq 0$
$$
\left\|\Sigma_{m}-\Sigma\right\| \leq \mathcal{O}\left(\sqrt{\frac{n+u}{m}}+\frac{n+u}{m}\right)\|\Sigma\|
$$
with probability at least $1-2 e^{-u}$.
 \end{lemma}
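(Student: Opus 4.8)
The plan is to reduce the problem to the isotropic (identity-covariance) case and then invoke a standard concentration bound for the extreme singular values of a Gaussian matrix. First I would whiten the samples: writing $a_i = \Sigma^{1/2} g_i$ with $g_i \sim \mathcal{N}(\mathbf{0},\mathbf{I})$ i.i.d., one gets $\Sigma_m - \Sigma = \Sigma^{1/2}\big(\frac{1}{m}\sum_{i=1}^m g_i g_i^T - \mathbf{I}\big)\Sigma^{1/2}$, so by submultiplicativity of the spectral norm together with $\|\Sigma^{1/2}\|^2 = \|\Sigma\|$,
\[
\|\Sigma_m - \Sigma\| \leq \|\Sigma\|\,\Big\|\tfrac{1}{m}\sum_{i=1}^m g_i g_i^T - \mathbf{I}\Big\|.
\]
This isolates all the randomness in the isotropic deviation $\big\|\tfrac{1}{m}G^TG - \mathbf{I}\big\|$, where $G\in\mathbb{R}^{m\times n}$ is the matrix with i.i.d. standard Gaussian rows $g_i^T$.

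Next I would control the extreme singular values of $G$. By the Davidson--Szarek / Gordon bound (combined with Gaussian concentration of measure, in the now-standard form for the smallest and largest singular values of a Gaussian matrix), for every $t \geq 0$,
\[
\sqrt{m} - \sqrt{n} - t \leq s_{\min}(G) \leq s_{\max}(G) \leq \sqrt{m} + \sqrt{n} + t
\]
holds with probability at least $1 - 2e^{-t^2/2}$. Dividing by $\sqrt{m}$ shows that every eigenvalue $\lambda$ of $\tfrac{1}{m}G^TG$ satisfies $|\sqrt{\lambda} - 1| \leq \delta$ with $\delta := \sqrt{n/m} + t/\sqrt{m}$; hence $\big\|\tfrac{1}{m}G^TG - \mathbf{I}\big\| = \max_\lambda |\lambda - 1| \leq 2\delta + \delta^2$.

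Finally I would calibrate the deviation parameter. Choosing $t = \sqrt{2u}$ gives the target probability $1 - 2e^{-u}$ and $\delta = \sqrt{n/m} + \sqrt{2u/m} \leq \sqrt{2}\,\sqrt{(n+u)/m}$ via the elementary inequality $\sqrt{a}+\sqrt{b}\leq\sqrt{2(a+b)}$. Substituting into $2\delta + \delta^2$ and multiplying through by $\|\Sigma\|$ yields exactly
\[
\|\Sigma_m - \Sigma\| \leq \mathcal{O}\Big(\sqrt{\tfrac{n+u}{m}} + \tfrac{n+u}{m}\Big)\|\Sigma\|,
\]
as claimed. I expect no genuine obstacle here: the only nontrivial ingredient is the singular-value concentration inequality, a standard cited result. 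If one prefers to avoid citing it --- or to cover general sub-Gaussian rows, which after whitening become isotropic rather than exactly standard Gaussian --- the fallback is a direct $\varepsilon$-net argument: bound $\big\|\tfrac{1}{m}\sum g_ig_i^T - \mathbf{I}\big\|$ by $2\max_{x\in\mathcal{N}}\big|x^T(\tfrac{1}{m}\sum g_ig_i^T - \mathbf{I})x\big|$ over a $\tfrac14$-net $\mathcal{N}$ of $\mathcal{S}_2^{n-1}$ with $|\mathcal{N}|\leq 9^n$, observe that each $x^T(\cdots)x = \tfrac{1}{m}\sum_i(\langle g_i,x\rangle^2 - 1)$ is an average of centered sub-exponential variables, apply Bernstein's inequality (whose two tail regimes generate precisely the $\sqrt{(n+u)/m}$ and $(n+u)/m$ terms), and close with a union bound over the net. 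In either route the only bookkeeping subtlety is ensuring that the $\sqrt{n}$ coming from the net size (or singular-value shift) combines additively with the $\sqrt{u}$ from the tail into the single factor $\sqrt{n+u}$.
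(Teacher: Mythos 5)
Your proof is correct, and it is genuinely more than what the paper provides: the paper's entire ``proof'' of Lemma \ref{spectralnorm} is a one-line citation to Exercise 4.7.3 of \cite{vershynin2018high}. That exercise is stated for general sub-Gaussian rows and is solved there by essentially your fallback route --- whiten via $\Sigma^{-1/2}$, take a $\tfrac14$-net of the sphere with $9^n$ points, apply Bernstein to the sub-exponential averages $\tfrac1m\sum_i(\langle g_i,x\rangle^2-1)$, and union bound --- so your sketch of that argument is, in effect, the proof hiding behind the citation. Your primary route is a cleaner Gaussian specialization: after the (correct) reduction $\|\Sigma_m-\Sigma\|\leq\|\Sigma\|\,\|\tfrac1m G^TG-\mathbf{I}\|$, the Davidson--Szarek singular-value bound gives the two-sided control with explicit constants and the exact probability $1-2e^{-u}$ after setting $t=\sqrt{2u}$, with no net bookkeeping; the price is that it applies only to exactly Gaussian rows, which is all the paper actually uses (the rows of $A$ are $\mathcal{N}(\mathbf{0},\Sigma)$), whereas the lemma as stated assumes only mean zero and covariance $\Sigma$ --- an assumption under which the exponential tail is false, so the statement silently needs the sub-Gaussian hypothesis of the cited exercise; your closing remark correctly flags this. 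Two harmless slips: the inequality $\sqrt{n/m}+\sqrt{2u/m}\leq\sqrt{2}\sqrt{(n+u)/m}$ fails (take $n=u$; the correct constant from $\sqrt{a}+\sqrt{b}\leq\sqrt{2(a+b)}$ is $2$, not $\sqrt{2}$), and when $m<n$ the lower singular-value bound is vacuous --- but both are absorbed: the constant disappears into the $\mathcal{O}$, and in the rank-deficient case $\delta\geq\sqrt{n/m}\geq 1$ makes $2\delta+\delta^2\geq 1$ cover the zero eigenvalues, so the bound $\|\tfrac1m G^TG-\mathbf{I}\|\leq 2\delta+\delta^2$ survives in both regimes.
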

 \begin{proof}
 See  exercise 4.7.3 in \cite{vershynin2018high}.
 \end{proof}

Now we can move to the proof to Theorem \ref{theorem2}.

\begin{proof}
Similar as \eqref{eql} in the proof to Theorem \ref{thel1}, by \eqref{eqth25} in Lemma \ref{pv} and triangle inequality, we have
with probability  at least $0.99$ that
\begin{equation}\label{eq23}
\begin{array}{ll}
C_0 \|h\|^2 \leq \frac{1}{m}\|Ah\|^2
&\leq 2\langle h, \frac{1}{m}A^T(y -AG(\overline{z}))\rangle.\\[1.5ex]
&\leq 2|\langle h, \frac{1}{m}A^T(y -A\widetilde{x^*}) \rangle|
+ 2|\langle h, \frac{1}{m}A^T A(\widetilde{x^*}-G(\overline{z})) \rangle|.
\end{array}
\end{equation}
 We have to bound the two terms in \eqref{eq23}. For the first term, let $v = \frac{h}{\|h\|} = \frac{G(\widehat{z}) - G(\overline{z})}{\|G(\widehat{z}) - G(\overline{z})\|}$, hence $v\in \mathcal{D}$. Then by \eqref{eqth26} in Lemma \ref{pv} and Lemma \ref{gw}, we obtain that with probability  at least $0.99$
\begin{equation}\label{eq30}
|\langle h, \frac{1}{m}A^T(y -A\widetilde{x^*}) \rangle|  \leq \mathcal{O}\left(\sqrt{\frac{k\log(rLn/(k\gamma))}{m}}\right)  \|h\|.
\end{equation}
For the second term in \eqref{eq23}, we apply Cauchy-Schwarz inequality and using spectral norm estimation for random matrix in Lemma \ref{spectralnorm}  , we get with high probability at least $1-e^{-n}$ that
\begin{align}\label{eq31}
 |\langle h, \frac{1}{m}A^T A(\widetilde{x^*}-G(\overline{z})) \rangle|
 &\leq \|h\|  \frac{1}{m}\|A^T(A\widetilde{x^*} - AG(\overline{z}))\| \nonumber\\
 &\leq (\|A^TA/m-\Sigma\|_2+\|\Sigma\|) \|\widetilde{x^*}-G(\overline{z})\| \|h\| \nonumber \\
 &\leq \mathcal{O}(\sqrt{\frac{2n}{m}}+\frac{2n}{m}+1)\|\Sigma\|\tau\|h\|\\
 & \leq \mathcal{O}(\tau \frac{n}{m})\|h\|.
 \end{align}

Combining \eqref{eq23}, \eqref{eq30} and \eqref{eq31} we get
$$\|h\|\leq \mathcal{O}(\sqrt{\frac{k\log(rLn/(k\gamma))}{m}}) +\mathcal{O}(\frac{\tau n}{m}).$$
Moreover, if the approximation error $\tau = \mathcal{O} (\frac{\sqrt{mk\log (Ln)}}{n})$ and $r = \mathcal{O}(1)$, the above inequality can be reduced to
\begin{equation*}
\|\hat{x}-c{x^*}\|_2\leq \mathcal{O}\left(\sqrt{\frac{k}{m}\log(Ln)}\right),
\end{equation*}
which completes the proof.
\end{proof}

By assuming the   Lipschitz constant $L$ is  larger than $n$ (this usually  holds in deep neural network generators), the estimation
error $\mathcal{O}(\sqrt{\frac{k\log L}{m}})$ and the sample complexity $m \geq \mathcal{O}(k\log L) $ proved in Theorem \ref{theorem2} are  sharp
even in  the standard compressed sensing with generative prior \cite{liu2020information}.
 Under generative prior,   \cite{liu2020sample} proposed
the estimator   $\hat{x} = G(\hat{z})$ with $\hat{z} \in \{z: y=\mathrm{sign}(AG(z))\}$ in the setting the rows of $A$   are   i.i.d.   sampled from   $\mathcal{N}(\textbf{0},\mathbf{I})$.   The sample complexity obtained  in \cite{liu2020sample}
 is also $\mathcal{O}(k\log L)$.
 \cite{qiu2020robust} proposed unconstrained empirical
risk minimization to recover the  1-bit CS in the scenario that   the rows of $A$   are   i.i.d.   sampled from   subexponential distributions, and the generator $G$ is restricted to be  a  $d$-layer ReLU network. The sample complexity derived in \cite{qiu2020robust} is $\mathcal{O}(kd\log n)$.

There are some works on generative priors which assumes that the target signals can be exactly generated by a generator $G$, i.e.,  $x^* \in \mathcal{G}_{k,\tau}(r)$ with $\tau=0$, see e.g. 
\cite{liu2020generalized,qiu2020robust}.
As mentioned in Theorem \ref{theorem2}, we can relax this assumption by requiring
the  target signal $x^*$ can  be generated  by $G$  approximately, i.e.,
$x^* \in \mathcal{G}_{k,\tau}(r)$  with
\begin{equation}\label{eq:tau}
\tau = \mathcal{O} (\frac{\sqrt{mk\log (Ln)}}{n}).
\end{equation}

Since natural signals/images data  with low intrinsic dimension can be represented   approximately by   neural networks is empirically verified in  \cite{goodfellow14,kingma14,rezende2015variational}.
 Next, we verify the assumption \eqref{eq:tau} by construct   a generator  ${G}$ with  properly chosen depth and width  based on  the recent approximation ideas of  deep neural networks \cite{shen2019nonlinear,vershynin2020memory,vardi2021optimal,huang2021error} by  utilizing the bit extraction techniques \cite{bartlett1999almost,bartlett2019nearly}. To this end, we recall the
definition of  Minkowski dimension  which is used to measures the intrinsic dimension of the target signals living in a large ambient dimension.

\begin{definition}\label{Minkowski dimensions}
\textnormal{
The upper and the lower Minkowski dimensions of a set $A \subseteq \bR^n$ are defined respectively as
\begin{align*}
\overline{\dim}_M(A) := \limsup_{\epsilon\to 0} \frac{\log |N_{\epsilon}|}{-\log \epsilon}, \\
\underline{\dim}_M(A) := \liminf_{\epsilon\to 0} \frac{\log |N_{\epsilon}|}{-\log \epsilon},
\end{align*}
where $N_{\epsilon}$ is the $\epsilon$-net of $A$.
If $\overline{\dim}_M(A) = \underline{\dim}_M(A) = \dim_M(A)$, then $\dim_M(A)$ is called the \emph{Minkowski dimension} of the set $A$.
}
\end{definition}

The Minkowski dimension measures how the  number of elements in the $\epsilon$-net $N_{\epsilon}$ of  $A$ decays when the radius of covering balls converges to zero. We collect the useful properties from  \cite{falconer2004fractal} of Minkowski dimension.
\begin{proposition}\cite{falconer2004fractal}\label{pmd}
$\dim_M(A) < \bar{n}$ if and only if $\forall \gamma >0$, $$\epsilon^{-(\bar{n}-\gamma)}\leq |N_{\epsilon}| \leq \epsilon^{-(\bar{n}+\gamma)}$$ holds when $\epsilon$ small enough.
Furthermore,    $\dim_M(A) \leq \bar{n}$ implies that
$\forall \epsilon>0$, there exist an $\epsilon$-net $N_{\epsilon}$ of $A$  such that $|N_{\epsilon}| \leq c\epsilon^{-\bar{n}}$, where $c$ is a  finite number.
\end{proposition}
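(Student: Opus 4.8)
The plan is to read both assertions directly off the definition of the Minkowski dimension (Definition \ref{Minkowski dimensions}): once logarithms are taken, the inequalities in the statement are nothing more than reformulations of the convergence of $\log|N_\epsilon|/(-\log\epsilon)$ to $\bar n$. Throughout I would restrict to $\epsilon\in(0,1)$, so that $-\log\epsilon>0$ and dividing by it preserves the direction of the inequalities.

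First I would establish the equivalence. Taking logarithms in $\epsilon^{-(\bar n-\gamma)}\le|N_\epsilon|\le\epsilon^{-(\bar n+\gamma)}$ and dividing by $-\log\epsilon$ gives
\[
\bar n-\gamma \;\le\; \frac{\log|N_\epsilon|}{-\log\epsilon} \;\le\; \bar n+\gamma .
\]
Requiring this for all sufficiently small $\epsilon$ and every $\gamma>0$ says precisely that the $\limsup$ and $\liminf$ appearing in Definition \ref{Minkowski dimensions} both pin down the value $\bar n$, which is the dimension condition. The converse is the same computation run backwards: starting from the convergence of the ratio, I would fix $\gamma$, extract the threshold $\epsilon_\gamma$ furnished by the definition of the limit, and exponentiate to recover the two-sided bound on the range $\epsilon<\epsilon_\gamma$.

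For the \emph{furthermore} part I would use only the upper half of this argument. Since $\dim_M(A)\le\bar n$, for each $\gamma>0$ there is a threshold $\epsilon_\gamma$ with $|N_\epsilon|\le\epsilon^{-(\bar n+\gamma)}$ whenever $\epsilon<\epsilon_\gamma$; when the dimension lies strictly below $\bar n$, I can choose $\gamma$ small enough that the exponent may be taken to be exactly $\bar n$ on this range. It then remains to exhibit a single finite constant valid for all $\epsilon>0$: here I would invoke the boundedness of $A$, so that on the coarse scales $\epsilon\in[\epsilon_\gamma,1)$ the minimal $\epsilon$-net is no larger than a fixed $\epsilon_\gamma$-net, whose finite cardinality can be folded into $c$.

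The step I expect to demand the most care is this last reconciliation of exponents. The limit definition only delivers bounds of the shape $\epsilon^{-(\bar n+\gamma)}$, whereas the conclusion asks for exactly $\epsilon^{-\bar n}$ with a finite $c$; closing that gap genuinely requires the strict inequality $\dim_M(A)<\bar n$, so that the slack $\gamma$ can be spent on lowering the exponent back to $\bar n$, together with the boundedness of $A$ to sweep the finitely many coarse scales into the constant. I would therefore state the clean bound under $\dim_M(A)<\bar n$ and handle the borderline case by letting $c$ absorb an arbitrarily small power of $\epsilon$, as is standard in box-counting arguments.
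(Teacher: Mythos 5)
The paper gives no proof of this proposition at all---it is quoted verbatim with a citation to \cite{falconer2004fractal}---so the only meaningful comparison is with the standard box-counting argument, which is exactly what you give: take logarithms, divide by $-\log\epsilon$ on $\epsilon\in(0,1)$, and translate the two-sided bound into the statement that $\liminf$ and $\limsup$ of $\log|N_\epsilon|/(-\log\epsilon)$ coincide with $\bar n$. Your argument is correct in substance. Two remarks on how it interacts with the statement as printed. First, be explicit that your computation proves the equivalence of the two-sided bound with $\dim_M(A)=\bar n$, not with the strict inequality $\dim_M(A)<\bar n$ appearing in the proposition: under strict inequality the lower bound $|N_\epsilon|\ge\epsilon^{-(\bar n-\gamma)}$ fails once $\gamma<\bar n-\dim_M(A)$, so the ``$<$'' must be read as a typo for ``$=$''. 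You gesture at this by calling it ``the dimension condition'' but never flag the discrepancy, and a referee would want that said outright.

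Second, your diagnosis of the \emph{furthermore} clause is exactly right and is the genuinely delicate point. From $\dim_M(A)\le\bar n$ alone the definition only yields $|N_\epsilon|\le c_\gamma\,\epsilon^{-(\bar n+\gamma)}$ for each $\gamma>0$; in the borderline case $\dim_M(A)=\bar n$ the clean bound $|N_\epsilon|\le c\,\epsilon^{-\bar n}$ can genuinely fail (covering numbers of order $\epsilon^{-\bar n}\log(1/\epsilon)$ are consistent with dimension exactly $\bar n$), so the strict inequality $\dim_M(A)<\bar n$ is needed to spend the slack $\gamma$ on restoring the exponent, and the coarse scales $\epsilon\in[\epsilon_\gamma,1)$ are absorbed into $c$ via total boundedness of $A$ (which is automatic once some $|N_\epsilon|$ is finite). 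Note that the paper itself invokes the clean bound in Theorem \ref{thapp} with $\dim_M(A^*)=k$ exactly, where strictly speaking only $s\le c_\gamma\epsilon^{-(k+\gamma)}$ is available; this is harmless downstream since $\gamma>0$ is arbitrary, but your restatement---clean bound under $\dim_M(A)<\bar n$, or an extra arbitrarily small power of $\epsilon^{-1}$ in the borderline case---is the defensible form of the proposition.
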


The next three Lemmas present the approximation ability of the deep neural networks.

 \begin{lemma}\label{fit1}
 For any $\mathcal{W},\ell\in \mathbb{N}$, given $\mathcal{W}^{2} \ell$ samples $\left(z_{i}, y_{i}\right), i=1, \ldots, \mathcal{W}^{2} \ell$, with distinct $z_{i} \in \mathbb{R}^{k}$ and $y_{i} = \sum_{j=1}^{\ell}2^{-j}b_{i,j}$, $b_{i,j}\in\{0,1\}$. There exists a ReLU network $G_1$ with width $4 \mathcal{W}+4$ and depth $\mathcal{\ell}+2$ such that $G_1\left(z_{i}\right)=y_{i}$ for $i=1, \ldots, \mathcal{W}^{2} \ell.$
 \begin{proof}
 This lemma   follows directly from  of Lemma $2.1$ and $2.2$ in
\cite{shen2019nonlinear}.
 \end{proof}
 \end{lemma}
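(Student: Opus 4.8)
The plan is to build $G_1$ as a composition of two standard mechanisms for ReLU memorization: a \emph{point-packing} stage that exploits width, and a \emph{bit-extraction} stage that exploits depth. The structure of the target is the guide here. Each output $y_i=\sum_{j=1}^{\ell}2^{-j}b_{i,j}$ is a dyadic number carrying $\ell$ bits, which suggests spending one hidden layer per bit and explains the depth $\ell+2$; the factor $\mathcal{W}^2$ in the sample count should come from the elementary fact that a composition of two width-$\mathcal{W}$ piecewise-linear maps resolves $\mathcal{W}^2$ distinct regions, which explains why width $O(\mathcal{W})$ suffices.

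First I would reduce to scalar inputs. Since the $z_i\in\mathbb{R}^k$ are distinct, a generic linear functional $w^{\top}z$ separates them, so an initial affine layer (using the identity $t=\sigma(t)-\sigma(-t)$ to propagate values) replaces each $z_i$ by a distinct scalar $x_i$, and after relabeling we may assume $x_1<x_2<\cdots<x_{\mathcal{W}^2\ell}$. This consumes one of the two extra depth layers and a constant additive term in the width. After this reduction the problem is purely one-dimensional: fit $\mathcal{W}^2\ell$ ordered scalars to prescribed $\ell$-bit dyadic values.

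Next I would invoke the two building blocks. Partition the sorted indices into $\mathcal{W}^2$ consecutive groups and, using width $O(\mathcal{W})$ and constant depth, construct a piecewise-linear map that assigns to every $x_i$ a single dyadic ``code'' identifying the bit-pattern relevant to its group; this quadratic gain $\mathcal{W}\times\mathcal{W}=\mathcal{W}^2$ from width $O(\mathcal{W})$ is exactly the content of Lemma~2.1 of \cite{shen2019nonlinear}. Then, using one hidden layer per bit together with two buffer coordinates carrying a running accumulator, the bit-extraction routine of Lemma~2.2 of \cite{shen2019nonlinear} peels off the successive binary digits and accumulates the partial sums $\sum_{j\le J}2^{-j}b_{i,j}$, producing $y_i$ exactly after $\ell$ layers while keeping the width within a constant multiple of $\mathcal{W}$. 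Composing the projection, the packing map, and the $\ell$-layer extractor yields a ReLU network that interpolates all $\mathcal{W}^2\ell$ samples.

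The hard part is not any single estimate but the exact bookkeeping. One must check that the packing and extraction stages compose within the precise budget (width exactly $4\mathcal{W}+4$ and depth exactly $\ell+2$), that the buffer coordinates carrying the accumulated output and the group code pass through each ReLU layer unclipped so that the interpolation is \emph{exact} rather than merely approximate, and that the bit-extraction arithmetic loses no precision across the $\ell$ compositions. Since these two mechanisms are precisely what Lemmas~2.1 and 2.2 of \cite{shen2019nonlinear} provide, I would quote them directly and devote the remaining effort to verifying that the width and depth counts add up as claimed.
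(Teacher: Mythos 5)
Your proposal is correct and takes essentially the same route as the paper: the paper's entire proof is the one-line citation of Lemmas 2.1 and 2.2 of \cite{shen2019nonlinear}, which are exactly the width-based point-packing and depth-based bit-extraction mechanisms you describe and compose. Your only addition is making explicit the reduction from $\mathbb{R}^k$ to scalar inputs via a generic separating linear functional (which the paper leaves implicit); note this step need not consume a depth layer, since the projection merges with the first affine layer of the one-dimensional network.
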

 \begin{lemma}\label{bitext}
 For any $\ell \in \mathbb{N}$, there exists a ReLU network $G_2$ with width $8$ and depth $2\ell$  such that $G_2(x, j)=b_{j}$ for any $x=\sum_{j=1}^{\ell}2^{-j}b_j$  with $b_{j} \in\{0,1\}$ and $j=1,2, \ldots, \ell$.
 \end{lemma}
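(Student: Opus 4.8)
The plan is to realize the bit-extraction map by an explicit construction built from the binary doubling (shift) map. Write the input as $x=\sum_{j'=1}^{\ell}2^{-j'}b_{j'}=0.b_1b_2\cdots b_\ell$ and set $x^{(1)}=x$. The key recursion is the shift $x^{(i+1)}=2x^{(i)}-b_i$ with $b_i=\mathbf 1[x^{(i)}\ge \tfrac12]$, under which $x^{(i)}=0.b_ib_{i+1}\cdots b_\ell$, so that the $i$-th extracted bit is exactly $b_i$ and after $j$ shifts the target bit $b_j$ has been produced. It then remains to (i) realize one shift by a bounded-width ReLU block, and (ii) select and store the bit produced at the step whose index matches the input $j$.

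Each shift needs a hard threshold, which ReLU cannot compute in general; here I would exploit that the inputs are \emph{exact} $\ell$-bit fractions, so every attainable tail $x^{(i)}$ lies either in $[0,\tfrac12-2^{-\ell}]$ (when $b_i=0$) or in $[\tfrac12,1)$ (when $b_i=1$), a separation of at least $2^{-\ell}$. On such values the two-neuron gadget $\rho(2^\ell(x-\tfrac12)+1)-\rho(2^\ell(x-\tfrac12))$, with the ReLU activation $\rho(t)=\max\{t,0\}$, equals $b_i$ \emph{exactly}, and $x^{(i+1)}=2x^{(i)}-b_i$ is then affine in the gadget outputs and the carried $x^{(i)}$. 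For the selection I would avoid carrying a running counter by encoding the step number $i$ in the layer-dependent weights of the $i$-th block, so the equality test $\mathbf 1[j=i]$ becomes the tent $\rho(1-\rho(j-i)-\rho(i-j))$ of the carried index $j$ against the constant $i$. Writing $e_i=\mathbf 1[j=i]$, the accumulator update $o^{(i+1)}=o^{(i)}+b_ie_i$ uses the product-of-bits identity $b_ie_i=\rho(b_i+e_i-1)$, valid since $b_i,e_i\in\{0,1\}$; because $e_i=1$ for the unique index $i=j$ and $o^{(\cdot)}=0$ before that step, the accumulator holds $0$ until step $j$, becomes $b_j$, and is preserved thereafter, so the final stored value is $b_j$.

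To meet depth $2\ell$ I would spend exactly two ReLU layers per bit and \emph{pipeline} the equality test one step ahead: the first layer of block $i$ computes the two threshold neurons, the two halves $\rho(j-(i+1))$ and $\rho((i+1)-j)$ needed to finalize $e_{i+1}$, and carries $x^{(i)},j,o^{(i)},e_i$; the second layer finalizes $e_{i+1}$ via the outer tent, forms the shifted tail $x^{(i+1)}$, forms the gated increment $\rho(b_i+e_i-1)$, and passes the accumulator forward, folding the addition $o^{(i)}+b_ie_i$ into the affine map feeding the next block. A neuron-by-neuron count then shows each layer uses at most eight units, giving width $8$ and depth $2\ell$ (up to boundary formatting of the input index and the output readout). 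The main obstacle is precisely this width-$8$ bookkeeping: every carried quantity must be kept nonnegative so it survives ReLU as the identity, the equality indicator must be pipelined so that the conditional write collapses into a single ReLU, and one must check that no spurious write occurs at steps $i\neq j$. The exactness of the threshold gadget, rather than mere approximation, is the second point to argue carefully from the $2^{-\ell}$ separation, and is what makes the whole extraction exact.
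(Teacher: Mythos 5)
Your construction is correct, and it is genuinely more informative than what the paper does: the paper's entire ``proof'' of Lemma~\ref{bitext} is a citation to Lemma~5.7 of \cite{huang2021error}, with no construction given. What you have written is essentially a self-contained reconstruction of the standard bit-extraction argument behind that citation (going back to the bit-extraction technique of \cite{bartlett1999almost,bartlett2019nearly}): the shift recursion $x^{(i+1)}=2x^{(i)}-b_i$, an \emph{exact} threshold from the two-ReLU gadget $\rho(2^{\ell}(x-\tfrac12)+1)-\rho(2^{\ell}(x-\tfrac12))$, a tent-function equality test against the layer-indexed constant, and the gated write $b_ie_i=\rho(b_i+e_i-1)$. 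Your key quantitative claims all check out: if $b_i=0$ then $x^{(i)}\le \tfrac12-2^{-(\ell-i+1)}\le\tfrac12-2^{-\ell}$, so the gadget's output is exactly $b_i$ on every attainable tail; the tent $\rho(1-\rho(j-i)-\rho(i-j))$ is exactly $\mathbf 1[j=i]$ on integers; and all carried quantities ($x^{(i)}$, $j$, $o^{(i)}$, $e_i$) are nonnegative, so they pass through ReLU as the identity. A neuron count of your pipelined two-layer block (two threshold units, two equality halves, four carries in the first layer; finalized indicator, shifted tail, gated increment, two carries in the second) stays within width $8$, and $\ell$ blocks give depth $2\ell$. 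The one boundary issue you flag but do not resolve — initializing $e_1$, since the pipeline only produces $e_{i+1}$ at block $i$ — closes cleanly within the budget: since $j\ge 1$ is an integer, $e_1=\rho(2-j)$ is a single ReLU computable in the first layer of block~$1$ (where $o^{(1)}=0$ frees a carry slot), so no extra layer is needed. In short: the paper buys brevity by citation; your route buys a verifiable, exact construction with the stated width and depth, at the cost of the bookkeeping you correctly identify as the main burden.
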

 \begin{proof}
 This lemma follow from  Lemma 5.7 in \cite{huang2021error}.
 \end{proof}
 \begin{lemma}\label{intmul}
  Let $\mathcal{W}\in \mathbb{N}$. Given any $\mathcal{W}^{2} \ell^2$ points  $\{\left(z_{i},  b_{i,j}\right) i=1, \ldots, \mathcal{W}^{2} \ell,  j=1,...,\ell\}$, where $z_{i} \in \mathbb{R}^{k}$ are distinct and $b_{i,j} \in \{0,1\}$. There exists a ReLU network $G_3$ with width $4 \mathcal{W}+6$ and depth $3 \ell+1$ such that $G_3\left(z_{i},j\right)=b_{i,j}$, $i=1, \ldots, \mathcal{W}^{2} \ell,   j=1,...,\ell$.
  \end{lemma}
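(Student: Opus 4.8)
The plan is to obtain $G_3$ by composing the two networks supplied by Lemmas \ref{fit1} and \ref{bitext}: first encode each point $z_i$ into the single scalar $y_i=\sum_{j=1}^{\ell}2^{-j}b_{i,j}$ that packs all $\ell$ of its bits, and then read off the requested bit with the bit-extraction network. Concretely, I would take $G_1$ to be the interpolating network of Lemma \ref{fit1} applied to the $\mathcal{W}^2\ell$ distinct pairs $(z_i,y_i)$, take $G_2$ to be the bit-extraction network of Lemma \ref{bitext}, and set $G_3(z,j)=G_2(G_1(z),j)$. For $z=z_i$ this gives $G_1(z_i)=y_i=\sum_{j=1}^{\ell}2^{-j}b_{i,j}$, and then $G_2(y_i,j)=b_{i,j}$ by the defining property of $G_2$, which is exactly the interpolation required.

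The one structural point to handle is that $G_2$ consumes the index $j$ as a second input, whereas $G_1$ only reads $z$. I would therefore run $G_1$ while simultaneously carrying $j$ untouched through every layer of $G_1$ via an identity (skip) channel, using $j=\mathrm{ReLU}(j)-\mathrm{ReLU}(-j)$, so that the pair $(G_1(z),j)$ is delivered at the interface between the two blocks. This skip channel is a separate block of two neurons appended to the layers of $G_1$, so it does not disturb $G_1$'s computation; the $G_1$-stage then has width $(4\mathcal{W}+4)+2=4\mathcal{W}+6$, while the $G_2$-stage retains its width $8$. Since $4\mathcal{W}+6\ge 10>8$ for every $\mathcal{W}\ge 1$, the width of the composed network is $4\mathcal{W}+6$.

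For the depth, naive stacking would give $(\ell+2)+2\ell=3\ell+2$. To recover the stated $3\ell+1$ I would merge the final affine (output) layer of $G_1$ with the first affine layer of $G_2$: the scalar $y$ produced by $G_1$ enters $G_2$ only through $G_2$'s first linear map, so these two consecutive affine transforms can be composed into a single affine map with no intervening ReLU, eliminating one layer; the carried index $j$ is routed into the same merged layer. This yields depth $(\ell+2)+2\ell-1=3\ell+1$.

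The argument is entirely bookkeeping, and the only care needed is matching the exact budgets. The two facts to verify are that the skip channels transporting $j$ occupy a disjoint set of neurons from $G_1$'s nodes (so the width overhead is exactly two and $G_1$'s outputs are unaffected), and that the junction merge is legitimate, i.e.\ that no nonlinearity intervenes between $G_1$'s output map and $G_2$'s first linear transform. Both are routine for ReLU architectures, so I expect no genuine obstacle beyond confirming the constants $4\mathcal{W}+6$ and $3\ell+1$.
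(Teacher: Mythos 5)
Your proposal is correct and follows essentially the same route as the paper, which also defines $G_3(\cdot,j)=G_2(G_1(\cdot),j)$ by composing the interpolating network of Lemma \ref{fit1} with the bit-extraction network of Lemma \ref{bitext}. Your explicit bookkeeping --- the two-neuron skip channel carrying $j$ (accounting for width $4\mathcal{W}+4+2=4\mathcal{W}+6$) and the merging of the adjacent affine maps at the junction (accounting for depth $3\ell+1$ rather than $3\ell+2$) --- simply makes precise the constants that the paper asserts without detail.
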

  \begin{proof}
    $\forall i=1, \ldots, \mathcal{W}^{2} \ell$, let $y_{i}= \sum_{j=1}^{\ell}2^{-j}b_{i,j} \in[0,1].$  By  Lemma \ref{fit1} there exists a network $G_{1}$ with width $4 \mathcal{W}+4$ and depth $\ell+2$ such that $G_{1}\left(z_{i}\right)=y_{i}$ for $i=1, \ldots, \mathcal{W}^{2} \ell$.
By Lemma 6.4, there exists a network $G_{2}$ with width 8 and depth $2\ell$ such that $G_{2}\left(y_{i}, \ell\right)=b_{i, \ell}$ for any $i=1, \ldots, \mathcal{W}^{2} \ell$ and $j=1, \ldots, \ell$. Therefore, the function $G_3(\cdot, j)=G_{2}\left(G_{1}(\cdot), j\right)$  implemented by a ReLU network with width $ 4\mathcal{W}+6$ and depth $3 \ell+1$ satisfies our requirement.
  \end{proof}

\begin{theorem}\label{thapp}
Assume the target signals $x^*\in A^*\subseteq [0,1]^n$ with $\dim_M(A^*) = k$. Then $\forall \tau \in (0,1)$ there exist a generator network $G:\mathbb{R}^{k}\rightarrow \mathbb{R}^n$
with depth $3\ell+2$  and width  $(4\lceil \sqrt{sn/\ell}\rceil+6)n$ such that $x^*\in \mathcal{G}_{k,\tau,2}(1), \forall x^*\in A^*$, where
$\ell = \lceil\log_2(\frac{2n}{\tau})\rceil+1, \quad s = \mathcal{O}(\tau^{-k}).$
\end{theorem}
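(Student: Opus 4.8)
The plan is to realize $G$ as a ReLU network that stores a finite quantized codebook for $A^*$, and then to show that every $x^*\in A^*$ sits within $\tau$ of a stored codeword. First I would exploit the intrinsic-dimension hypothesis: since $\dim_M(A^*)=k$, Proposition \ref{pmd} provides, for each $\epsilon>0$, an $\epsilon$-net $N_\epsilon=\{p_1,\dots,p_s\}$ of $A^*$ with $s=|N_\epsilon|\le C\epsilon^{-k}$ for a finite constant $C$. Taking $\epsilon$ of order $\tau$, small enough that $|c|\,\epsilon\le 3\tau/4$, yields a net of size $s=\mathcal{O}(\tau^{-k})$ matching the value of $s$ in the statement, and guarantees that each $x^*$ has a net point $p_i$ with $\|cx^*-cp_i\|_2\le|c|\,\epsilon\le 3\tau/4$ (here $|c|=(2q-1)\sqrt{2/(\pi(\sigma^2+1))}<1$).

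Next I would quantize the codebook. For each net point $p_i$ and coordinate $d\in[n]$, record the first $\ell$ binary digits of the scaled entry $(cp_i)_d$, producing $v_{i,d}=\sum_{j=1}^{\ell}2^{-j}b_{i,d,j}$ with $|(cp_i)_d-v_{i,d}|\le 2^{-\ell}$. With the prescribed $\ell=\lceil\log_2(2n/\tau)\rceil+1$ one has $2^{-\ell}\le \tau/(4n)$, so the codeword $v_i=(v_{i,1},\dots,v_{i,n})$ obeys $\|cp_i-v_i\|_2\le \sqrt{n}\,2^{-\ell}\le \tau/4$. Combined with the covering estimate of the first step, the triangle inequality gives $\|cx^*-v_i\|_2\le\tau$; hence it only remains to build a network with $G(z_i)=v_i$ for a family of distinct latent codes $z_1,\dots,z_s\in\mathcal{B}_2^k(1)$, after which $x^*\in\mathcal{G}_{k,\tau,2}(1)$ is immediate.

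The construction of that network is the heart of the argument and the main obstacle. I would view the codebook as a string of $sn\ell$ bits $\{b_{i,d,j}\}$, i.e.\ $sn$ scalars each carrying $\ell$ bits, and set $\mathcal{W}=\lceil\sqrt{sn/\ell}\rceil$ so that $\mathcal{W}^2\ell\ge sn$ and the fitting and bit-extraction constructions of Lemmas \ref{fit1}, \ref{bitext} and \ref{intmul} apply to this many samples. Assembling $n$ parallel copies of the bit-extraction block of Lemma \ref{intmul}, one per output coordinate and with the coordinate index $d$ hard-wired into each block so that block $d$ reads off $v_{i,d}$ from the single latent input $z_i$, and appending one recombination layer, produces a ReLU network of width $(4\lceil\sqrt{sn/\ell}\rceil+6)n$ and depth $3\ell+1+1=3\ell+2$ computing $z_i\mapsto v_i$ exactly. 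The delicate points, which I expect to occupy most of the proof, are the architectural bookkeeping: verifying that the $n$ parallel blocks can all be driven by the one latent code $z_i$ (by folding the coordinate offset into each block's weights), that the distinct-input hypotheses of Lemmas \ref{fit1}--\ref{intmul} are met, and that the widths and depths compose to exactly the stated bounds. Once $G(z_i)=v_i$ is in hand, the error budget of the first two steps yields $\|cx^*-G(z_i)\|_2\le\tau$ for every $x^*\in A^*$, completing the proof.
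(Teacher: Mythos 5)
Your proposal follows essentially the same route as the paper's proof: a Minkowski-dimension $\epsilon$-net via Proposition \ref{pmd}, $\ell$-bit binary truncation of the net points with the same triangle-inequality error budget, and memorization of the $sn$ truncated scalars by the networks of Lemmas \ref{fit1}--\ref{intmul} with $\mathcal{W}=\lceil\sqrt{sn/\ell}\rceil$, yielding the stated width $(4\lceil\sqrt{sn/\ell}\rceil+6)n$ and depth $3\ell+2$. Your explicit treatment of the scaling constant $c$ and your insistence on a single fixed network driven by distinct latent codes $z_1,\dots,z_s\in\mathcal{B}_2^k(1)$ are, if anything, slightly more careful than the paper's own construction (which selects coordinates by re-tuning the input weights $a^{(i)}$ per net point), but the approach is the same.
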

\begin{proof}
Let $\epsilon = \tau/2$. Since the target signals $x^*$ are contained in $A^*$ with   $\dim_M(A^*) = k$, then there exist an $\epsilon$-net $N_{\epsilon} = \{o^{*}_i\}_{i=1}^{s}$ of $A^*$ with $s\leq c\epsilon^{-k}$ by Proposition \ref{pmd}.
For any $o^{*}_i \in N_{\epsilon}$, let the binary representation of $ o^{*}_i$ be $o^{*}_i=\sum_{j=1}^{\infty} 2^{-j} \tilde{o}^{*}_{i,j}$ with $\tilde{o}^{*}_{i,j}\in\mathbb{R}^n$ whose entries $\in \{0,1\}$.
 Let $\ell = \lceil\log_2(\frac{n}{\epsilon})\rceil+1$, the truncation of $o^{*}_i$ be $T_{\ell}o^{*}_i = \sum_{j=1}^{\ell} 2^{-j} \tilde{o}^{*}_{i,j}$,
then it implies that \begin{equation}\label{tc}
\|o^{*}_i-T_{\ell}o^{*}_i\|\leq \epsilon, \forall i =1,...,s.
\end{equation}
By construction of $N_{\epsilon}$,  (\ref{tc}) and triangle inequality, we have $\{T_{\ell}o^{*}_i\}_{i=1}^s$ is an $\tau$-net of $A^*$.
  Let $e= (1,0,0,...0)^T\in\mathbb{R}^s$, $\mathcal{W} = \lceil \sqrt{sn/\ell}\rceil$,  and  $z_i$ be the $i$-th element of $\{e,  e/2,...,e/(ns)\}$,  $b_{i,j} = G_2(T_{\ell}o^{*}_i,j)$,
   $i=1,...,sn, j=1,...\ell$.
  By Lemma \ref{intmul}, we have $G_3\left(z_{i},j\right)=b_{i,j}$, $i=1, \ldots, sn,   j=1,...,\ell$.
  $\forall x\in \mathbb{R}^k$, define  $$G(x)=(\sum_{j=1}^{\ell} 2^{-j}G_3(a_1 x, j),\sum_{j=1}^{\ell} 2^{-j}G_3(a_2x, j),...\sum_{j=1}^{\ell} 2^{-j}G_3(a_n x,j))^T:\mathbb{R}^k\rightarrow \mathbb{R}^n$$ with $a_1>0, a_2>0,...,a_n>0.$
  Let $\theta_1$ and $\theta_2$ be the parameters of the ReLU network $G_1$ and $G_2$, respectively.
   Denote $a =(a_1,...,a_n)\in\mathbb{R}^n$ and $\theta=(a,\theta_1,\theta_2)$.
  Then $G(x)$ is a ReLU network with free parameter $\theta$ and depth $3\ell+2$  and width  $(4\lceil \sqrt{sn/\ell}\rceil+6)n$. We use $G_{\theta}(x)$ to emphasize the dependence of $G$ on $\theta$.
 For $i=1,2,...s$, let  $a^{(i)} = (\frac{1}{(i-1)n+1},\frac{1}{(i-1)n+2}, ..., \frac{1}{(i-1)n+n})^T \in\mathbb{R}^n,$ 
 $\theta^{(i)}  = (a^{(i)},\theta_1,\theta_2)$,   by construction, we have
    $G_{\theta^{(i)}} (e) = T_{\ell}o^{*}_{i}$.
\end{proof}

\section{Numerical Experiments}

\subsection{Experiments setting}
The rows of the matrix $A$ are i.i.d. random vectors sampled from the multivariate normal distribution $\mathcal{N}(\mathbf{0},\Sigma)$ with $\Sigma_{jk}=\nu^{|j-k|}$, $1\leq j,k\leq n$, $\nu =0.3$ in our tests. The elements of $\epsilon$ are generated from $\mathcal{N}(\mathbf{0},\sigma^2\mathbf{I}_m)$ with $\sigma=0.1$ in our examples. $\eta$ has independent coordinates with $\mathbb{P}\{\eta_i=1\}=1-\mathbb{P}\{\eta_i=-1\}=q\neq\frac{1}{2}$, with different $q$ which will be clarified in each example.
The generative model $G$ in our experiments is a pretrained variational autoencoder (VAE) model\footnote{We use the pre-trained generative model of (Bora et., 2017) available at https://github.com/
AshishBora/csgm.}. The MNIST dataset \cite{lecun1998gradient} consisting $60000$ handwritten images of size 28$\times$28 is applied in our tests. For this dataset, we set the VAE model with a latent dimension $k=20$. Input to the VAE is a vectorized binary image of input dimension $784$. Encoder and decoder are both fully connected network with two hidden layers, i.e., encoder and decoder are with size $784-500-500-20$ and $20-500-500-784$, respectively.

To avoid the  norm constraint $\|z\|_2 \leq r$ in  the least square decoder (\ref{ls1})-(\ref{ls2}), we use its Lagrangian form as following:
\begin{equation}\label{LS generative model}
\min_{z} \frac{1}{2m}\|y - AG(z)\|^2 + \lambda\|z\|^2,
\end{equation}
where the regularization parameter $\lambda$ is chosen as $0.001$ for all the experiments.
We do $10$ random restarts with $1000$ steps per restart and choose the best estimation. The reconstruction error is calculated over $10$ images by averaging the per-pixel error in terms of the $l_2$ norm.
\subsection{Experiment Results}
We compare our results with two SOTA algorithms: BIHT \cite{JacquesLaska:2013}
and generative prior based algorithm  VAE \cite{liu2020sample}.
The least square decoder with VAE in our paper is named by LS-VAE.

Figures $1-4$ indicate that with or without sign flip in measurements,
generative prior based methods attain more accurate reconstruction than BIHT.
Additionally, if sign flips are added, Figures $3$ and $4$ show that LS-VAE
attain the higher accurate reconstruction.

In Figure $5$, we plot the reconstruction error for different measurements (from $50$ measurements to $300$ measurements).
VAE and LS-VAE both have smaller reconstruction errors, but LS-VAE is slightly better.
Moreover, after $200$ measurements, the reconstruction error emerges saturation for generative prior based methods, due to
its output is constrained to the presentation error \cite{bora2017compressed}.
\begin{figure}[ht!]
\begin{center}
\includegraphics[scale=0.4]{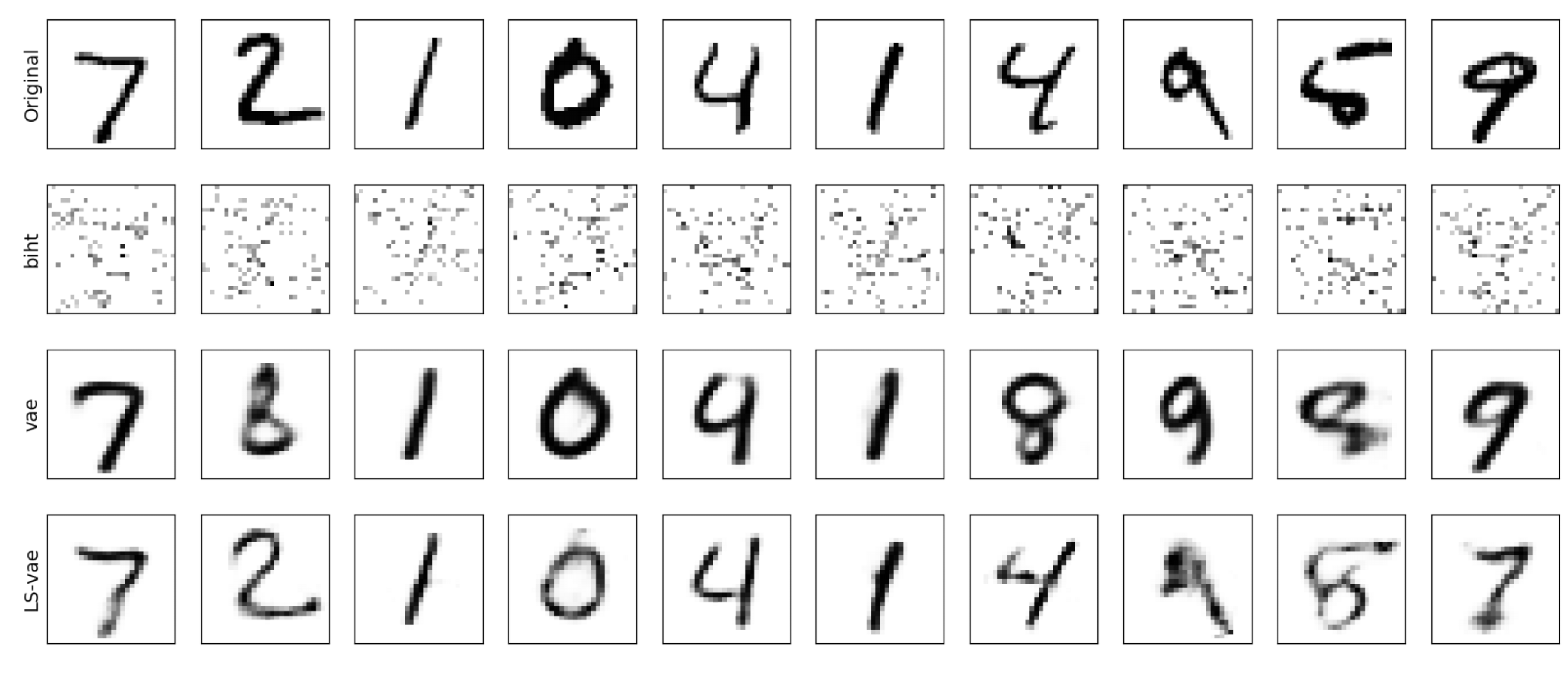}
\setlength{\abovecaptionskip}{1pt}
\caption{original images, reconstructions by BIHT, VAE and LS-VAE (from top to bottom row)
with 100 measurements}
\end{center}
\end{figure}
\begin{figure}[ht!]
\begin{center}
\includegraphics[scale=0.4]{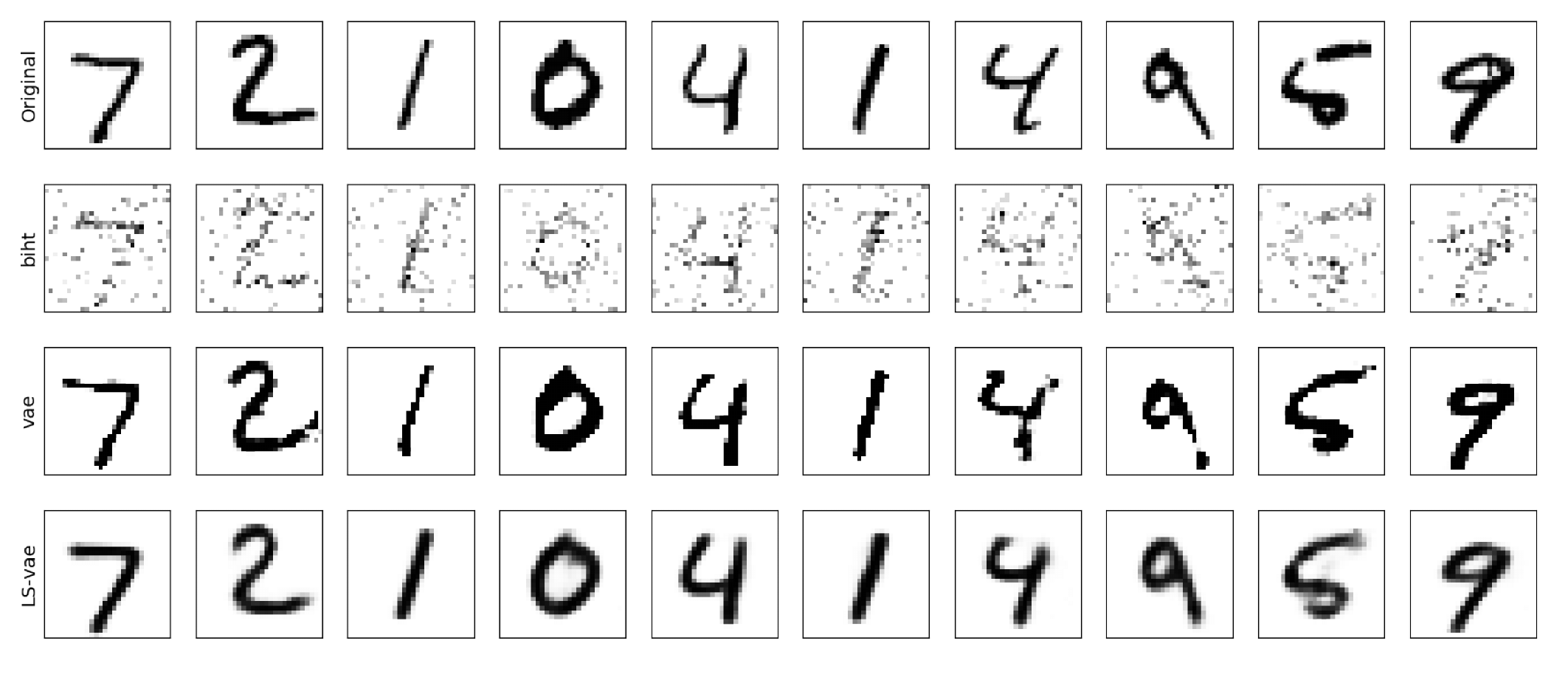}
\setlength{\abovecaptionskip}{1pt}
\caption{original images, reconstructions by BIHT, VAE and LS-VAE (from top to bottom row) with 300 measurements}
\end{center}
\end{figure}
\begin{figure}[ht!]
\begin{center}
\includegraphics[scale=0.42]{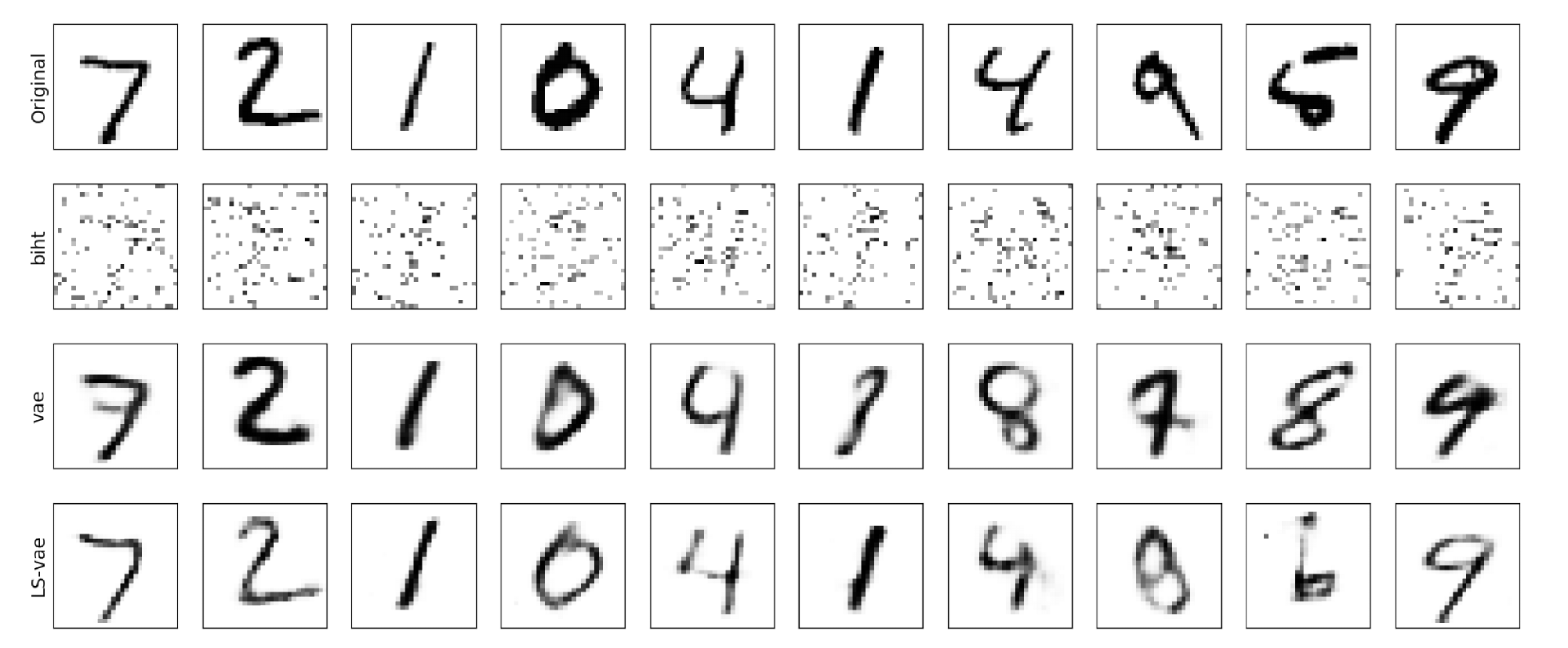}
\setlength{\abovecaptionskip}{1pt}
\caption{original images, reconstructions by BIHT, VAE and LS-VAE (from top to bottom row)  with 100 measurements
and 3\% sign flips}
\end{center}
\end{figure}
\begin{figure}[ht!]
\begin{center}
\includegraphics[scale=0.4]{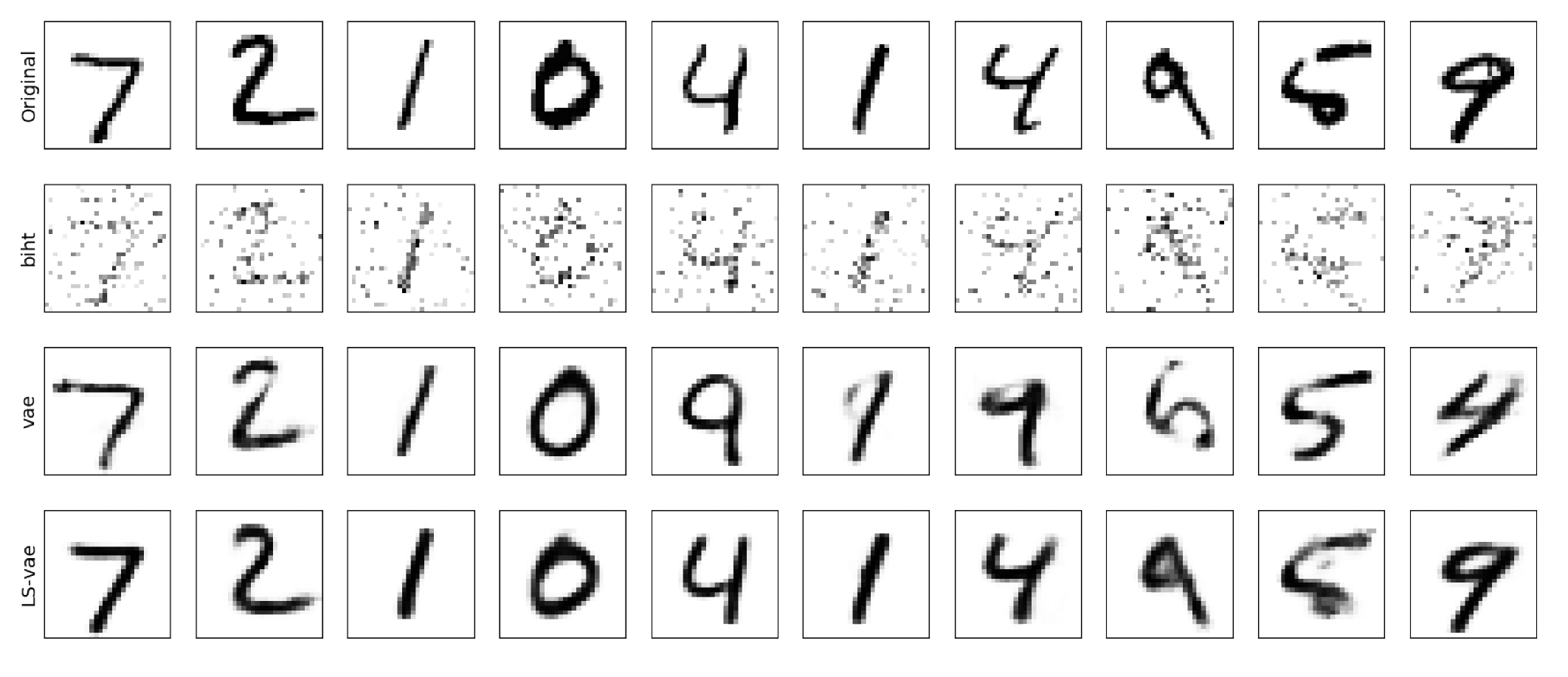}
\setlength{\abovecaptionskip}{1pt}
\caption{original images, reconstructions by BIHT, VAE and LS-VAE (from top to bottom row)  with 300 measurements
and 3\% sign flips}
\end{center}
\end{figure}
\begin{figure}[ht!]
\begin{center}
\includegraphics[scale=0.5]{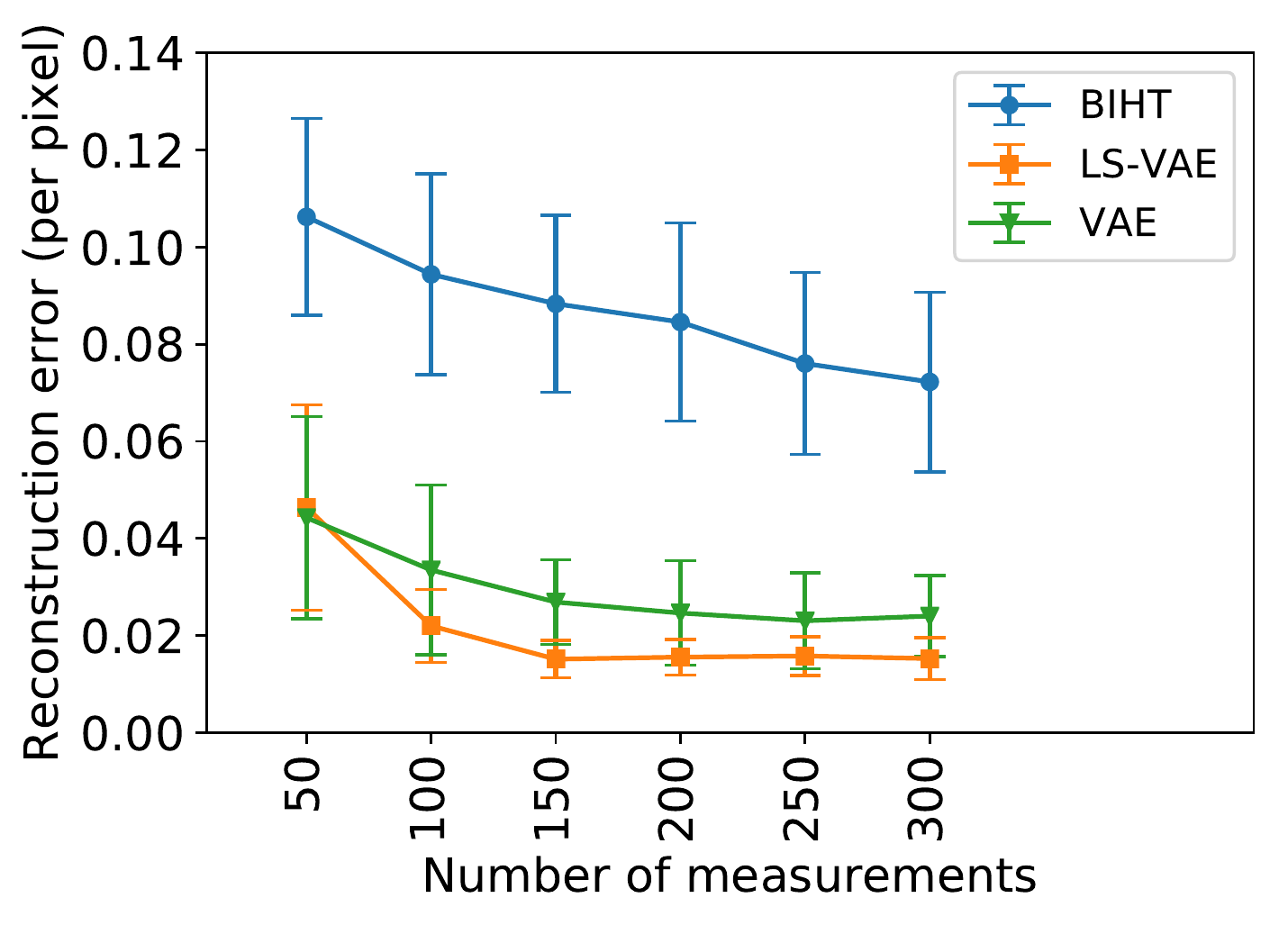}
\includegraphics[scale=0.5]{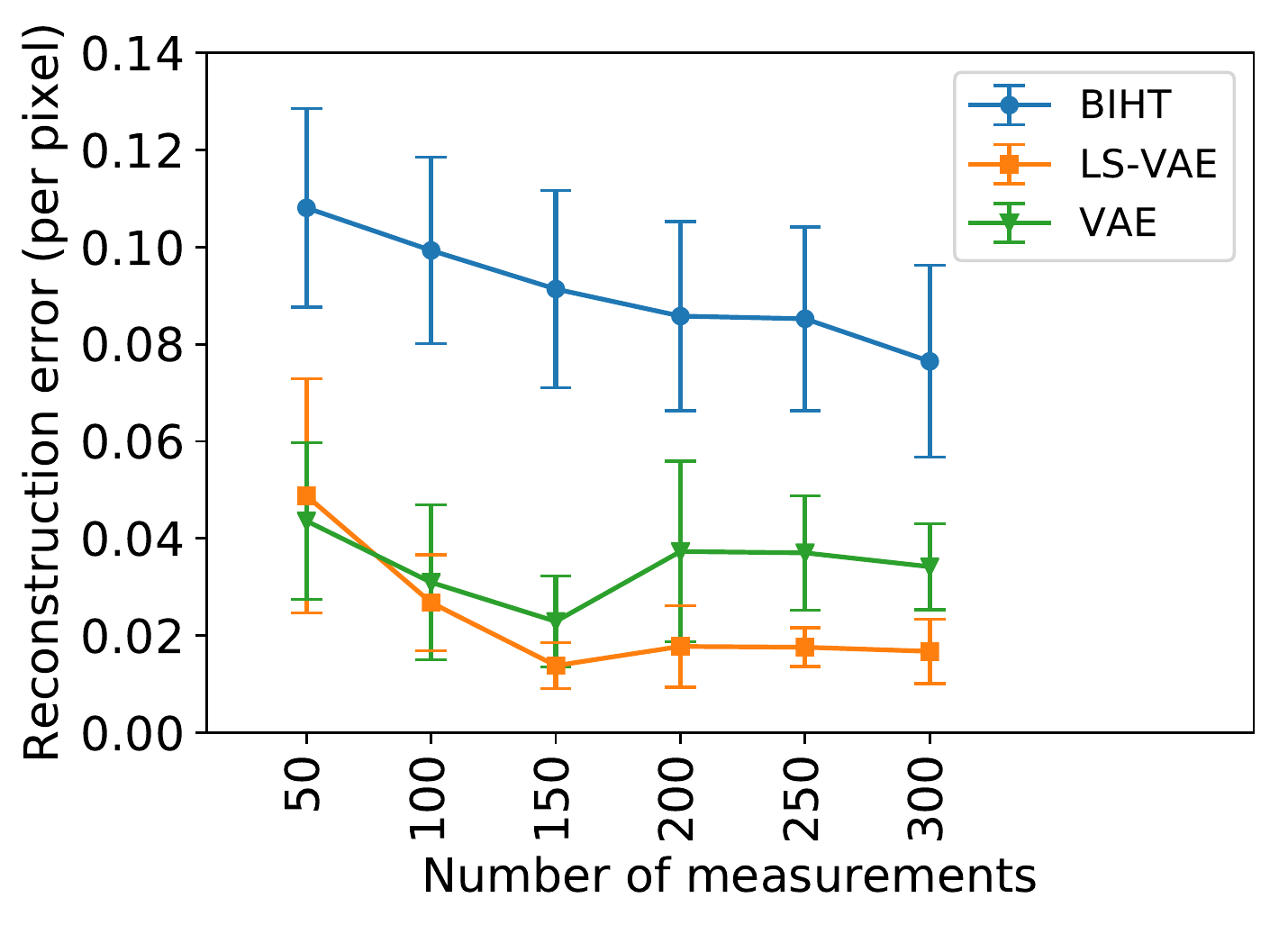}
\setlength{\abovecaptionskip}{1pt}
\caption{pixel-wise reconstruction error as the number of measurements varies. Error bars indicate 95\% confidence intervals.
The  result with  no sign flips and  with 3\% sign flips are shown in the left and right, respectively.}
\end{center}
\end{figure}
\section{Conclusion}
We present a least square decoder by exploring the low generative intrinsic dimension structure of the target for the
1-bit compressive sensing with possible sign-flips.
Under the assumption  that the target signals can be approximately generated via
$L$-Lipschitz generator $G: \mathbb{R}^k\rightarrow\mathbb{R}^{n}, k\ll n$,
we prove that, up to a constant $c$, with high probability, the least square decoder
achieves a sharp  estimation error  $\mathcal{O} (\sqrt{\frac{k\log (Ln)}{m}})$ as long as  $m \geq O( k \log (Ln))$.
We verify  the (approximately) deep generative prior holds if the target signals have low intrinsic dimensions by constructing a ReLU network with  properly chosen depth and width.
Extensive numerical simulations and comparisons with state-of-the-art methods  demonstrated
the least square decoder is the robust to noise and sign flips, which verifies our theory. We only consider the analysis of the  least squares decoders, we will leave the analysis of   the regularized least squares decoder  in the future work.

\section*{Acknowledgements}
Y. Jiao is supported in part
by the National Science Foundation of China under Grant 11871474 and by the research fund of KLATASDSMOE of China.
X. Lu is partially supported by the National Key Research and Development Program of China (No.2018YFC1314600), the National Science Foundation of China (No. 11871385), and the Natural Science Foundation of Hubei Province (No. 2019CFA007).
\bibliographystyle{plain}
\bibliography{csref}

\end{document}